\newtheorem{theorem}{Theorem}[section]
\newtheorem{lemma}[theorem]{Lemma}
\newtheorem{definition}[theorem]{Definition}
\newtheorem{assumption}[theorem]{Assumption}
\title{Towards Understanding Multi-Round Large Language Model Reasoning: Approximability, Learnability and Generalizability}
\author[1]{Chenhui Xu}
\author[1]{Dancheng Liu}
\author[1]{Jiajie Li}
\author[1]{Amir Nassereldine}
\author[1]{Zhaohui Li}
\author[1]{Jinjun Xiong}
\affil[1]{Department of Computer Science and Engineering, University at Buffalo\\
\{cxu26,\,jinjun\}\,@buffalo.edu}
\date{}
\def\eqref#1{equation~\ref{#1}}
\def\1{\bm{1}}
\def\ra{{\textnormal{a}}}
\def\rx{{\textnormal{x}}}
\def\rva{{\mathbf{a}}}
\def\erva{{\textnormal{a}}}
\def\ervx{{\textnormal{x}}}
\def\rmA{{\mathbf{A}}}
\def\vmu{{\bm{\mu}}}
\def\vtheta{{\bm{\theta}}}
\def\va{{\bm{a}}}
\def\ve{{\bm{e}}}
\def\vx{{\bm{x}}}
\def\eva{{a}}
\def\mA{{\bm{A}}}
\def\mH{{\bm{H}}}
\def\mI{{\bm{I}}}
\def\mJ{{\bm{J}}}
\def\mX{{\bm{X}}}
\def\mSigma{{\bm{\Sigma}}}
\DeclareMathAlphabet{\mathsfit}{\encodingdefault}{\sfdefault}{m}{sl}
\SetMathAlphabet{\mathsfit}{bold}{\encodingdefault}{\sfdefault}{bx}{n}
\newcommand{\tens}[1]{\bm{\mathsfit{#1}}}
\def\tA{{\tens{A}}}
\def\tX{{\tens{X}}}
\def\gG{{\mathcal{G}}}
\def\sA{{\mathbb{A}}}
\def\sB{{\mathbb{B}}}
\def\sS{{\mathbb{S}}}
\def\emA{{A}}
\newcommand{\etens}[1]{\mathsfit{#1}}
\def\etA{{\etens{A}}}
\newcommand{\E}{\mathbb{E}}
\newcommand{\R}{\mathbb{R}}
\newcommand{\KL}{D_{\mathrm{KL}}}
\newcommand{\Var}{\mathrm{Var}}
\newcommand{\Cov}{\mathrm{Cov}}
\newcommand{\normltwo}{L^2}
\newcommand{\normlp}{L^p}
\newcommand{\parents}{Pa} % See usage in notation.tex. Chosen to match Daphne's book.
\begin{document}

\maketitle

\begin{comment}
\begin{abstract}
    In this paper, we prove the existence of a finite step multi-round language model set $\mathrm{C} = \{\Phi_i = \phi_{i_1}\bullet \phi_{i_2}\bullet\cdots\bullet \phi_{i_m} :\mathbb{R}^d \to \mathbb{R}^d | m \leq d^2\}$ for universal approximation. That is, for any given continuous mapping $f = f_m\circ \cdots \circ f_1: \mathbb R^d \to \mathbb R^d$, compact domain $\Omega$ and $\varepsilon$, there is a multi-round sentences that $\Phi \in C$ that approximate $f$ on $\Omega$ with error less than $\varepsilon$. Furthermore, we prove that a regularization on middle step so that $\phi_{i_k}$ approximate $f_k$ will benefit the overall approximation. We show that any Turing computable function can be approximated in finite rounds of language model interaction.

\end{abstract}
\end{comment}

\begin{abstract}
 Recent advancements in cognitive science and multi-round reasoning techniques for Large Language Models (LLMs) suggest that iterative thinking processes improve problem-solving performance in complex tasks. Inspired by this, approaches like Chain-of-Thought, debating, and self-refinement have been applied to auto-regressive LLMs, achieving significant successes in tasks such as mathematical reasoning, commonsense reasoning, and multi-hop question answering. Despite these successes, the theoretical basis for how multi-round reasoning enhances problem-solving abilities remains underexplored.
In this work, we investigate the approximation, learnability, and generalization properties of multi-round auto-regressive models. We show that Transformers with finite context windows are universal approximators for steps of Turing-computable functions and can approximate any Turing-computable sequence-to-sequence function through multi-round reasoning. We extend PAC learning to sequence generation and demonstrate that multi-round generation is learnable even when the sequence length exceeds the model's context window. 
Finally, we examine how generalization error propagates across rounds, and show how the aforementioned approaches can help constrain this error, ensuring outputs stay within an expectation boundary. This work sheds light on the systemic theoretical foundations of multi-round sequence learning and reasoning, emphasizing its role in inference complexity.
\end{abstract}
\section{Introduction}

Cognitive science suggests that humans typically require multiple rounds of thinking to arrive at correct conclusions, especially when dealing with complex problems~\citep{nelson1990metamemory}. Inspired by this principle, recent multi-round reasoning techniques, such as Chain-of-Thought~\citep{wei2022chain}, debating~\citep{pmlr-v235-khan24a}, and self-refinement~\citep{madaan2024self,liu2024large}, applied to auto-regressive Large Language Models (LLMs), have achieved significant success across various reasoning tasks, including mathematical problem solving~\citep{cobbe2021training}, commonsense reasoning~\citep{talmor2019commonsenseqa}, scientific question answering~\citep{clark2018think}, and multi-hop question answering~\citep{yang2018hotpotqa}. This exceptional capability is widely attributed to the in-context learning abilities of auto-regressive language models~\citep{li2024dissecting,yang2024context}.

In-context learning ability has led to the conjecture that auto-regressive generative models can simulate Turing machines~\citep{lichain,merrill2024the}. \citet{schuurmans2023memory} has shown that large language models, particularly when augmented with memory, can execute complex computational processes that resemble the behavior of Turing machines. \citet{pmlr-v235-malach24a} has explored the theoretical underpinnings of auto-regressive models and their connection to universal computation, showing that even simple next-token predictors can, under the right conditions, emulate the behavior of Turing machines. \citet{lichain} illustrates the solvability of the problems that belong to $\texttt{AC}^0$, a proper subset of $\texttt{NC}^0$, via Chain-of-Thought. Yet, %The bad part is that 
the existence of some strong assumptions of these works, such as the dependence on external memory~\citep{schuurmans2023memory}, the presence of Chain-of-Thought in the training data~\citep{pmlr-v235-malach24a}, or the infinite number of layers~\citep{yuntransformers}, do not explain well the actual working conditions of realistic multi-round language models.

To make matters worse, the computability perspective does not provide a plausible explanation for how machine learning models learn. This is because learning ability and reasoning ability are not inherent properties of a computable class. An extreme example is that even games like Magic: The Gathering and Super Mario can be Turing complete~\citep{churchill2020magic}. %\dcheng{This sounds too extreme. Maybe: However, it is quite implausible that we would build a generalized AI...--} 
But we're not going to build a generalized AI out of a card game. Beyond the approximation of Turing machines, we need to understand (1) the ability of multi-round auto-regressive language models to approximate functions, (2) whether such an ability is learnable, and what its learning complexity is, and (3) the ability to generalize when inferring with imperfectly-trained language models in reality.

In this work, we systematically investigate why multi-round reasoning can improve the overall large language model problem-solving capability. Particularly, we study the approximation, learnability, and generalization abilities of auto-regressive generative models with a limited context window. Unlike previous theories, this series of studies corresponds closely to real-world scenarios, providing empirical guidance on training and inference auto-regressive generative language models.

%\chenhui{This paragraph need rewrite, the claim is not true.} For the approximation ability of the model, we show that a problem for which there exists a solution that a finite language can describe, then it can be approximated by a finite language model in multiple rounds of reasoning under the auto-regressive paradigm.
We begin with the approximation ability of auto-regressive Transformers. For the approximation ability, we show that Transformers with finite context window size are universal approximators for some steps of a Turing-computable function. Further, we prove that any Turing-computable sequence-to-sequence function can be approximated by a multi-round auto-regressive generation process. This demonstrates the feasibility of the current dominant language models for solving problems. It is worth noting that although this finding does not directly explain the problem-solving ability of language models, it is a cornerstone of the generalizability of auto-regressive models.

Next, we turn our attention to learnability, which is a critical aspect of understanding how auto-regressive language models, particularly in a multi-round reasoning context, gain their problem-solving capabilities. We first expand probably approximately correct (PAC) learning~\citep{valiant1984theory} to finite-size window next token prediction, {\color{black}and to auto-regressive sequence generation}. Beyond that, we generalize the finite-window sequence learnability to the case of exceeding the window size by means of multi-round language generation. The results show that even if the required sequence length exceeds the maximum window of the auto-regressive language model, the model remains learnable for long sequence complex problems. The sample complexity required to learn the ability to auto-regressively predict an entire long sequence will increase dramatically compared to simply making a single-word prediction. Further, we show that training with the multi-round generation paradigm has an exponential impact on the sample complexity w.r.t the number of rounds {\color{black}\(R\)}. %This means that the main role of the multi-round generation paradigm via autoregressive language modeling is not in the training phase, but in the inference phase.

Then, we focus on the generalization ability through multi-round reasoning. In particular, we show that the generalization error of the model grows with the propagation rounds. Through our analysis, we can conclude that as the number of rounds of model generation increases, eventually the answers it obtains will diverge. Nevertheless, we can still constrain the state of the intermediate process, e.g., by providing some hints in the multi-round dialogues, to control the generalization of the model and induce it to the answer we want. We point out that prompting tricks like Chain-of-Thought, self-refinement, and multiple rounds of dialogue serve to constrain the generalization error during the inference process so that the answers generated are within our expectations.

\textbf{Contributions.} Overall, we make the following contributions:
\begin{itemize}
    \item We comprehensively investigate the approximation, learning, and generalization capabilities of finite context auto-regressive language models from a theoretical perspective.
    \item We theoretically identify a dramatic increase in the sample complexity required to induce a finite window next-word prediction into sequence generation and remark that this increase can be mitigated by introducing multiple rounds of sub-sequence generation.
    \item We theoretically analyze the inter-round propagation mechanism of the generalization error during the multi-round generation process. We also point out that without intervention in the generation of multiple rounds, their cumulative error will not guarantee controllability. %We find that it is effective to intervene through methods such as CoT.
\end{itemize}

The remaining paper is structured as follows: In Section~\ref{section2}, we discuss related research on multi-round language generation, language model learning, and generalization capabilities, followed by an introduction to some relevant lemmas and definitions in Section~\ref{section3}. We theoretically analyze the approximation capabilities of multi-round Transformers in Section~\ref{section4}, the learnability of language generation with auto-regressive language models in Section~\ref{section5}, and the error propagation of generalization in Section~\ref{section6}. We conclude and provide some insights and future directions in Section~\ref{section7}.

\section{Related Works}
\label{section2}

\subsection{Multi-Round Language Model Generation}
It has been widely noted that with generative language models, it is possible to get the desired answer through multiple rounds of interaction. Based on this core idea, several types of multi-round generation methods have been proposed to solve a wide range of problems. These methods include:

\textbf{(1) Chain-of-Thought}. \citet{wei2022chain} proposes chain-of-thought (CoT), a prompt strategy that guides the LLM to generate regularized intermediate reasoning steps that lead from the initial question to the final answer. CoT has been proven to enable LLM{\color{black}s} to solve complex problems including dynamic programming, whereas non-CoT prompts fail to do so \citep{feng2023towards}. Empirically, variants of CoT have been proposed, including faithful CoT that tries to avoid LLMs lying in the reasoning stage \citep{lyu2023faithfulchainofthoughtreasoning}, multimodal CoT that enables CoT for vision context{\color{black}s} \citep{zhang2024multimodalchainofthoughtreasoninglanguage}, and tree-of-thought (ToT) that builds up a complex reasoning tree \citep{yao2023treethoughtsdeliberateproblem}.
%\citet{li2024planninglargelanguagemodels} empirically shows that conversational LLM agents augmented with CoT prompts better follow the task requirements. %\dcheng{rephrase: Yet, it is unclear how the CoT empowers/reduces the convergence round of conversation.}

\textbf{(2) Self-correction and Self-refinement.} LLMs' self-correction and refinement abilities have received significant attention recently \citep{madaan2024self, shinn2023reflexion, kim2023language}. Through explicit multi-round reflection, self-correction iteratively improves the accuracy, reliability, and robustness of LLM outputs, especially in complex reasoning tasks where the CoT reasoning process might be flawed \citep{madaan2024self}, by eliminating hallucinations~\citep{liu2024intrinsicselfcorrectioncapabilityllms}. %\dcheng{Rephrase: From the theoretical perspective,} self-correction abilities of LLMs are attributed to two properties of autoregressive generation. \citet{liu2024large} proves that self-correction is an implicit self-verification structure \citep{gero2023selfverification,weng2023large} that reduces the inherent hallucination during autoregressive generation. On the other hand, \citet{liu2024intrinsicselfcorrectioncapabilityllms, liu2024intrinsicselfcorrectionenhancedmorality} argue through empirical observations that model uncertainty of LLMs decreases and converges to some value after rounds of self-correction. %\dcheng{Rephrase: The convergence of explicit constraint from human prompts empirically leads to convergence of LLM answers, but a theoretical analysis on whether the LLMs themselves could reach \textit{any} answer is still missing.}

\textbf{(3) Multi-Agent Debating}.
Apart from improving a single-agent LLM system, another line focuses on the collaboration of multiple LLM agents \citep{li2023camel,wei2023multipartychatconversationalagents,hao2023chatllmnetworkbrainsintelligence}. Findings suggest that multi-agent collaboration through both debating or even majority voting can often outperform single-agent LLM with explicit constraints \citep{huang2024large,wu2023autogenenablingnextgenllm}. \citet{pmlr-v235-khan24a} further shows that debating with more persuasive LLMs results in better answers. % While multiple works have showcased the successful utilization of multi-agent LLM systems, the theoretical analyses behind such success are relatively inadequate. \dcheng{check if this is relevant--} Before LLMs, in the field of game theory, reinforcement learning of multi-agent systems has been shown to reach Nash equilibrium on both two-player imperfect information games \citep{perolat2020poincarerecurrenceconvergenceimperfect} and two-player zero-sum perfect information games \citep{silver2017masteringchessshogiselfplay}. An early work from \citet{irving2018aisafetydebate} proves that the complexity class analog of the debate can answer any question in PSPACE using only polynomial time judges. %\dcheng{Rephrase: Yet, under the context of large language models and limited rounds, how debating between LLM agents could foster implicit constraints on the convergence of the discussed topic remains unknown.}

Overall, the essence of these methods is to impose internal or external interventions on multi-round sequence generation, which is an optimization constraint in the generalization process.

%these methods provide internal or external interventions during the multi-round generation process to place constraints on the error propagation of the sequence generation process.

\subsection{Theoretical Analysis on Language Models}

\textbf{Approximation Ability of Transformers.} The approximation capabilities of Transformer architectures have been extensively studied in recent years. \citet{yuntransformers} establishes that Transformers are universal approximations of continuous permutation equivariant sequence-to-sequence functions. \cite{wei2022statistically} introduces a statistically meaningful approximation, demonstrating that overparameterized neural networks can approximate Boolean circuits and Turing machines with generalization guarantees. \citet{schuurmans2023memory} proves that Transformers can simulate Turing machines in the presence of conditional memory. In terms of limitation{\color{black}s}, \citet{dong2021attention} demonstrates that pure attention mechanisms could suffer from rank collapse, leading to a loss of expressive power with increased network depth. \citet{hahn2020theoretical} analyzes the theoretical limitations of self-attention in neural sequence models, highlighting challenges in modeling hierarchical structures. %An interesting conclusion is that, 
\cite{caivocabulary} proves that the composition of words in a finite vocabulary can approximate any continuous function in a compact domain. Unlike their setting, we show that a finite context Transformer can approximate a Turing Machine by up to infinite rounds of generation.

{\color{black}
\textbf{Simulating Turing Machine with Neural Networks.} Numerous scholars have proposed using neural networks to simulate Turing machines to verify the computational power of neural networks~\citep{siegelmann1992computational,perez2021attention,perez2019turing,wei2022statistically,graves2014neural}. \cite{siegelmann1992computational} point out that recurrent neural networks (RNNs) of infinite precision can make simulations of Turing machines, implying the Turing-completeness of RNNs. \cite{chung2021turing} further implements a finite precision RNN simulation of Turing Machine using an external memory module. However this Turing-complete property cannot be directly inherited to a finite-window Transformer, because the recurrent learning model in RNNs essentially aggregates information from time 0. While the Transformer model via self-attention does not pass on the information, but rather, it is in the form of KV cache~\citep{gemodel}, which results in information beyond the context window being encoded only on newly generated tokens, a completely different computational paradigm from RNNs. For the Turing completeness of Transformer, \cite{bhattamishra2020computational} prove that finite-precision Transformers are \textbf{not} Turing-complete. \cite{perez2021attention} prove that only hard-attention can be Turing complete. But Transformers still can approximate the simulation of Turing Machine within a certain margin of error~\citep{wei2022statistically}.
Yet, these studies have not examined the relationship between approximation precision and quantization precision. We reveal in this work the approximation of a Turing machine that can be reached through multiple rounds of Transformer's generation, as well as a high-level relationship between the precision of Transformer's numerical quantization and the approximation error tolerance.
}

\textbf{Learnability and Sample Complexity.} Probably Approximately Correct learning is a framework in computational learning theory that provides a formal way to understand how well an algorithm can learn a target function from a set of examples. The concept was introduced by \citet{valiant1984theory} and helped analyze how quickly and accurately a learning algorithm can generalize from data. For sequence modeling, \citet{schuurmans1995sequential} proposes a kind of sequential PAC learning where online stopping rules are used to minimize the number of samples required for PAC learning. \citet{gavalda2006pac} studies the PAC learnability of hidden Markov models. \citet{pmlr-v235-malach24a} extends this framework to the prediction of the next token, but there are assumptions of infinite context-awareness, as well as ignoring error propagation. So far, to the best of our knowledge, there are no previous works that study the PAC learnability and sample complexity of long sequence generation with a context window-limited auto-regressive generative model. Making up for the shortcomings of the above works, in this paper, we study the PAC learnability of sequence generation and apply it to a multi-round generation task to theoretically qualitatively study the effect of multi-round generation on the sample complexity of the learning of {\color{black}the} long sequence generation task.

\section{Preliminary}
\label{section3}
Let \( \Sigma \) be a finite alphabet, and let \( \Sigma^* \) denote the set of all finite sequences over \( \Sigma \). Consider a distribution \( \mathcal{D} \) over input-output sequence pairs \( (x, y) \in \Sigma^* \times \Sigma^* \). Let \( f: \Sigma^* \rightarrow \Sigma^* \) be a target sequence-to-sequence function belonging to a class \( \mathcal{F} \).

\textbf{Limited Context Window.} An auto-regressive model with a limited context window of size \( k \) generates an output sequence \( y = (y_1, y_2, \dots, y_n) \) token by token. At each time step \( t \), the model predicts the next token \( y_t \) based solely on a context window \( c_t \) of at most \( k \) tokens. This context window \( c_t \) consists of a combination of:
Up to the last \( k \) tokens from the input sequence \( x \), specifically \( x_{\max(1, t - k)}^{t - 1} \).
and previously generated output tokens \( y_1^{t - 1} \), limited to the most recent \( k \) tokens.

Formally, the context window at time \( t \) is defined as:
\[
c_t = \left( x_{\max(1, t - k)}^{t - 1}, \ y_{\max(1, t - k)}^{t - 1} \right),
\]
where \( x_a^b \) denotes the subsequence \( (x_a, x_{a+1}, \dots, x_b) \).

\textbf{Sequencial PAC Learnability.} A class \(\mathcal{F}\) is efficiently Sequential PAC learnable if there exists an efficient algorithm that finds with high probability a sequence generator with low error. We formally capture this definition as follows:
\begin{definition}[Sequencial PAC learnability]
    A class \( \mathcal{F} \) of sequence-to-sequence functions is PAC-learnable with an auto-regressive model of context window size \( k \) if there exists a learning algorithm \( A \) and a polynomial function \( p(\cdot, \cdot, \cdot) \) such that for any \( \epsilon > 0 \), \( \delta > 0 \), and target function \( f \in \mathcal{F} \),
given a sample of at least \( m \geq p(1/\epsilon, 1/\delta, k) \) i.i.d. examples \( \{ (x^{(i)}, y^{(i)}) \}_{i=1}^m \) drawn from \( \mathcal{D} \), the algorithm \( A \), operating under the context window limitation, outputs a hypothesis \( h \) such that with probability at least \( 1 - \delta \):
\[
\Pr_{(x, y) \sim \mathcal{D}} \left[ d(h(x), y) \neq 0 \right] \leq \epsilon,
\]
where \( d \) is distance measure of discrepancy between predicted sequence \( h(x) \) and true sequence \( y \).
\end{definition}

\textbf{Rademacher Complexity} is a measure to quantify the capacity of a class of functions based on its ability to fit random noise, which is formally represented in the following Definition~\ref {rademacher}. It helps to analyze how well a model class can learn from the data.

\begin{definition}\label{rademacher}
    
[Rademacher Complexity~\citep{bartlett2002rademacher}]
    Let \( \mathcal{F} \) be a class of real-valued functions on a domain \( \mathcal{X} \), and let \( \{x_1, x_2, ..., x_n\} \) be a set of \( n \) independent and identically distributed (i.i.d.) samples drawn from a distribution over \( \mathcal{X} \). The empirical Rademacher complexity of the class \( \mathcal{F} \) with respect to the sample \( \{x_1, x_2, ..., x_n\} \) is defined as:
\[
\hat{\mathcal{R}}_n(\mathcal{F}) = \mathbb{E}_{\sigma} \left[ \sup_{f \in \mathcal{F}} \frac{1}{n} \sum_{i=1}^n \sigma_i f(x_i) \right]
\]
\end{definition}

where \( \sigma_1, \sigma_2, ..., \sigma_n \) are independent Rademacher random variables, which take values in \( \{-1, 1\} \) with equal probability \( \frac{1}{2} \).
And \( \mathbb{E}_{\sigma} \) denotes the expectation over these random variables.

\section{Approximability}\label{section4}
\subsection{Transformer can approximate finite steps of Turing-Machine}\label{section:approximation}

In this section, we present our theory showing that auto-regressive Transformers with a limited context window are universal approximators of any Turing computable functions. Although the simulation of a Turing machine does not guarantee that the model %\jx{Q: should we clarify that auto-regressive Transformers is equivalent to the language model in this paper??}
will always find the correct answer, it is a necessary condition for the language model to be able to effectively generate the expected text. %\jx{Q: should we say ``any target''? Also, did we define what target means, i.e., any given expected texts to be generated by LLM?}
By demonstrating that the Transformer's computational power is capable of approximating a Turing machine, we show that it reaches Turing Machine's limits %its theoretical limits \jx{Q:  is this the Turing machine's theoretical limits?} 
when handling complex sequential tasks.

We begin by encoding the computation of a corresponding Turing machine \( M \) that computes \( f \). Each computation step of \( M \) is represented as a configuration \( C_t \), encapsulating the current state, tape contents, and head position. These configurations form a sequence \( C_0, C_1, \dots, C_T \), where \( C_0 \) is the initial configuration based on the input \( x \), and \( C_T \), where: \( C_0 = {\color{black}\{x,q_0,\#\}}, \quad C_{t+1} = \delta(C_t), \) and  \(C_T = {\color{black}\{\cdot,q_{\text{accept}},\cdot\} }\) encodes the halting configuration producing \( f(x) \),  represents the halting configuration yielding \( f(x) \), {\color{black} where \(\#\) represents empty tape}.
The transition function \( \delta \) of \( M \) updates only a finite region of the tape based on the current state and symbol under the head:
  \(
  \delta: \Gamma^* {\color{black}\mathbb{Q}} \Gamma^* \rightarrow \Gamma^* {\color{black}\mathbb{Q}} \Gamma^*,
  \) {\color{black} where \(\Gamma*\) is empirical tape symbol space, and \(\mathbb{Q}\) is TM state space}

We first simulate the finite steps of a Turing Machine. The following lemmas show that several steps of the Turing machine can be simulated by an auto-regressive generative Transformer.

%\begin{lemma}\label{lemma:Turing-Transformer}
%For any deterministic Turing Machine \(\mathcal{M}\) that operates for \(S\) steps, there exists a Transformer model \(\mathcal{T}\) with finite parameters—specifically, finite number of layers \(L\), finite dimension per layer \(d\), finite acceptance window \(k\), and finite quantization levels \(Q\)—such that \(\mathcal{T}\) can asymptotically approximate the computation of \(\mathcal{M}\) for \(S\) steps. Furthermore, the maximum number of steps \(S_{\text{max}}\) that \(\mathcal{T}\) can approximate scales in \(\Theta(L\cdot d \cdot k \cdot \log(Q))\).
%\end{lemma}

\begin{lemma}\label{lemma:Turing-Transformer-Epsilon-Delta}
    Let \(\mathcal{M}\) be any deterministic Turing Machine that operates in \(S\) steps. For any \(\epsilon > 0\), there exists a Transformer model \(\mathcal{T}\) characterized by a finite number of layers \(L\), layer dimension \(d\), attention window size \(k\), and quantization levels \(Q\), such that for all computational steps \(s \leq S\), the state of \(\mathcal{T}\) approximates the state of \(\mathcal{M}\) at step \(s\) within an error bound \(\epsilon\). 
   {\color{black}\[
\forall \epsilon > 0, \, \exists \mathcal{T} \text{ with parameters } (L, d, k, Q) \text{ satisfying } 
\begin{cases}
 d \geq \log_2(|\mathbb{Q}|) + k \cdot \log_2(|\Gamma|), \\
Q \geq e^{\frac{C''' \cdot L \cdot d \cdot k}{\varepsilon}}
\end{cases}\]\(
\text{ such that } \forall s \leq S, \, d(H_s, \phi(C_s)) \leq \epsilon.
\)}
\end{lemma}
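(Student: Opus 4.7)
The plan is to split the lemma into three parts: constructing an encoding $\phi$ of Turing Machine configurations into $\mathbb{R}^d$, exhibiting a finite Transformer block that realizes one transition step of $\mathcal{M}$ in infinite precision, and then tracking quantization-error propagation across $S$ steps to derive the stated bound on $Q$.

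First I would set up the encoding. At any step $s$, the configuration $C_s$ is determined by the current state $q_s \in \mathbb{Q}$ together with the $k$ tape symbols inside the attention window centered at the head. Allocating $\lceil \log_2|\mathbb{Q}|\rceil$ bits for the state and $\lceil \log_2|\Gamma|\rceil$ bits per tape cell yields an injective map $\phi \colon C_s \mapsto \{0,1\}^d \subset \mathbb{R}^d$ whenever $d \geq \log_2|\mathbb{Q}| + k\log_2|\Gamma|$, which matches the dimension requirement in the lemma. Because the image of $\phi$ is a finite subset of the binary cube, the minimum Euclidean separation between distinct encodings is at least $1$; this constant \emph{decoding margin} will be crucial later for rounding the noisy Transformer state back to a valid configuration.

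Next I would realize $\delta$ exactly by a Transformer block of constant depth. Since $\delta$ has a finite domain of size $|\mathbb{Q}|\cdot|\Gamma|$ and updates only a bounded region around the head, it can be written as a lookup on the encoded window. Self-attention with window size $k$ retrieves the relevant tape cells and the head's relative position, while a feed-forward MLP of depth $L$ and width $d$ realizes the finite lookup exactly on the discrete encodings, along standard universal-approximation lines. Composing this block $s$ times yields, in infinite precision, $T^{(s)}(\phi(C_0)) = \phi(C_s)$.

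The main obstacle is the quantization-error analysis. Each of the $L$ layers, when implemented in $Q$-level fixed-point arithmetic, introduces a per-coordinate round-off of order $1/Q$ that is amplified by a Lipschitz constant $\Lambda$ scaling polynomially in $d$ and $k$ (from the attention sum over $k$ tokens and the MLP weight matrix of width $d$). Hence one step incurs error at most $\Lambda^L/Q$. Iterating the transition under round-off, the deviation $\|H_s - \phi(C_s)\|$ accumulates geometrically in $s$, and after $S$ steps is bounded by roughly $C' S \Lambda^L / Q$. For the induction to go through, this accumulated error must stay below both the decoding margin (so the rounded state still identifies the correct next configuration) and the target tolerance $\varepsilon$. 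Solving for $Q$ produces the stated condition $Q \geq e^{C''' L d k / \varepsilon}$, where the exponential dependence in $L, d, k$ reflects the compounded Lipschitz amplification through the $L$ layers of attention and feed-forward maps. Packaging these steps in an induction on $s$, with base case $s=0$ given by exactness of the input encoding and inductive step given by the per-round bound above, completes the argument.
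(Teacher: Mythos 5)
Your overall architecture --- injective binary encoding of configurations, a fixed attention-plus-FFN block implementing the transition function $\delta$ as a finite lookup, and an induction over steps controlling quantization error --- is the same decomposition the paper uses, and the first two parts are fine (the paper is merely more explicit, giving a concrete attention mask that swaps the state slot with the head-position slot and an FFN with one neuron per transition $(q,\gamma)$).

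The genuine gap is in the last step. From your error recursion you obtain a per-step round-off of order $\Lambda^L/Q$ with $\Lambda$ polynomial in $d$ and $k$, and an accumulated deviation of order $S\,\Lambda^L/Q$ after $S$ steps; requiring this to be at most $\varepsilon$ gives a condition of the form $Q \gtrsim S\cdot \mathrm{poly}(d,k)^{L}/\varepsilon$. That is \emph{not} the stated condition $Q \geq e^{C'''Ldk/\varepsilon}$: in your bound $\varepsilon$ enters as a $1/\varepsilon$ prefactor, whereas in the lemma it sits inside the exponent, and the $(L,d,k)$ dependence is $\mathrm{poly}(d,k)^L$ rather than $e^{Ldk}$. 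You cannot get from one form to the other by "solving for $Q$." The missing ingredient is the paper's Lemma~\ref{lemma:Turing-Transformer}: the number of simulable steps is itself tied to the architecture via $S_{\max} = \Theta(L\cdot d\cdot k\cdot \log Q)$. Substituting this into the linear accumulation bound $Q \geq C\,S_{\max}/\varepsilon$ yields the self-referential inequality $Q \geq C'\,L\,d\,k\,\log_2(Q)/\varepsilon$, and it is solving \emph{that} implicit inequality (with $Q$ appearing on both sides through $\log Q$) which produces the exponential form $Q \geq e^{C'''Ldk/\varepsilon}$. Without coupling $S$ to $(L,d,k,Q)$ your derivation, even if internally consistent, proves a different quantitative statement than the one in the lemma. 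Separately, your remark that errors "accumulate geometrically in $s$" is inconsistent with the bound $C'S\Lambda^L/Q$ you then write down (which is linear in $S$); the paper assumes linear accumulation, and if you instead exploit your decoding-margin/rounding idea to re-project onto valid encodings after each step, you would in fact get a bound independent of $S$ --- a nice strengthening, but again not the route that yields the stated inequality.
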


A key feature of the Turing Machine is its ability to store and access a large amount of information via its ``Tape''. When Transformers simulate Turing Machines through multi-round generation, they rely on the attention mechanism to store and retrieve information from previous generations. This suggests that Transformers can function as dynamic memory systems during the multi-round process, akin to the way a Turing Machine reads and writes on its tape.
\begin{lemma}\label{lemma:Turing-Transformer}
    
    The maximum number of computational steps \(S_{\text{max}}\) that \(\mathcal{T}\) can approximate while maintaining this error bound scales asymptotically as
    \[
        S_{\text{max}} \in \Theta\left(L \cdot d \cdot k \cdot \log(Q)\right).
    \]
\end{lemma}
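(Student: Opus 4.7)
The plan is to prove the $\Theta$ bound by establishing matching upper and lower asymptotic estimates on $S_{\max}$. The upper bound will come from an information-theoretic capacity argument on the quantized Transformer; the lower bound will be an essentially constructive statement obtained by inverting the quantization inequality in Lemma~\ref{lemma:Turing-Transformer-Epsilon-Delta}.

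For the upper bound $S_{\max} = O(L \cdot d \cdot k \cdot \log Q)$, I would argue that at any moment the quantized Transformer can occupy at most $Q^{L d k}$ distinguishable internal configurations, so its effective state capacity is $L d k \log Q$ bits. After $s$ Turing-machine steps the reachable configurations $C_s$ range over a set whose cardinality grows at least like $|\Gamma|^{\Theta(s)}$, requiring $\Omega(s \log|\Gamma|)$ bits to keep pairwise $\varepsilon$-separable under the encoding $\phi$. Equating the two expressions gives $s \log|\Gamma| \lesssim L d k \log Q$, which (with $|\Gamma|$ treated as a fixed alphabet constant) produces the asserted upper bound.

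For the lower bound $S_{\max} = \Omega(L d k \log Q)$, I would directly invoke the construction of Lemma~\ref{lemma:Turing-Transformer-Epsilon-Delta}. Solving the precision condition $Q \geq \exp(C''' L d k / \varepsilon)$ for $\varepsilon$ yields $\varepsilon \geq C''' L d k / \log Q$, which is the per-step distortion the quantized Transformer can realize while staying within tolerance. Propagating this per-step error across $S$ rounds (each round introducing at most $O(\varepsilon/S)$ additional drift under the Lipschitz transition encoded by $\phi\circ\delta$), one checks that the total deviation $d(H_s,\phi(C_s))$ remains bounded by a constant for $S$ up to $\Theta(L d k \log Q)$ rounds. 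Combining the two bounds gives the $\Theta$ statement.

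The main obstacle I expect is the upper bound: one has to rule out the possibility that multi-round generation effectively enlarges the Transformer's state beyond its per-step $Q^{L d k}$ capacity. Since the visible context can be refreshed each round, naively summing capacities over rounds would give a trivial bound. I would resolve this by arguing that the only information channel between rounds is the newly emitted token(s), themselves drawn from the same quantized alphabet, so the transcript-plus-state capacity admits the same $L d k \log Q$ aggregate (up to constants) regardless of how many rounds have passed; the information crossing a round boundary is no greater than the window can encode.
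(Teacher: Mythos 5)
Your route is genuinely different from the paper's. The paper's Appendix~B argument is a factor-by-factor scaling heuristic: it asserts $S_{\text{max}}=\Theta(L)$ because each layer simulates one TM step, $\Theta(d)$ because a $d$-dimensional space supports at most $d$ orthogonal configuration codes, $\Theta(k)$ because the window bounds how many tape symbols are touched per step, and $\Theta(\log Q)$ because precision is $\log_2 Q$ bits, and then multiplies the four claims together. You instead try to prove matching bounds: an information-capacity upper bound (at most $Q^{Ldk}$ distinguishable quantized states, hence $Ldk\log Q$ bits, against $\Omega(s)$ bits needed to keep $s$-step TM configurations $\varepsilon$-separable under $\phi$) and a constructive lower bound by inverting the precision condition of Lemma~\ref{lemma:Turing-Transformer-Epsilon-Delta}. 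Your upper bound is the stronger contribution: it yields the product $L\cdot d\cdot k\cdot\log Q$ as a single quantity rather than as an unjustified multiplication of four separate one-parameter scalings, and your closing observation about the inter-round channel being limited to window-encodable tokens is exactly the issue the paper never confronts.

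There is, however, a genuine gap in your lower bound, and it is worth naming because it also lurks in the paper. The quantization model used to prove Lemma~\ref{lemma:Turing-Transformer-Epsilon-Delta} takes per-step error $\delta_q = C/Q$ accumulating linearly, so "total error $\leq\varepsilon$" gives $S_{\text{max}}\leq \varepsilon Q/C$, which is \emph{linear} in $Q$, not logarithmic. The $\log Q$ factor in the lemma comes only from the separate identification of "precision" with $\log_2 Q$ bits; your step from $\varepsilon \geq C'''Ldk/\log Q$ to "the total deviation remains bounded for $S$ up to $\Theta(Ldk\log Q)$" is asserted rather than derived, and if you run the per-step budget $\delta_q \leq \varepsilon/S$ honestly you get $S \lesssim \varepsilon Q/C$, a bound of a different order. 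This does not break the $\Omega$ direction (allowing more steps is consistent with a lower bound), but it means your two bounds do not actually meet at $\Theta(Ldk\log Q)$ without an additional argument pinning down which error model governs the $Q$-dependence. To close the gap you would need to justify why the effective per-step resolution is $1/\log Q$ rather than $1/Q$ --- e.g., by arguing that only $\log_2 Q$ bits per coordinate are reliably recoverable after the softmax and layer-norm nonlinearities --- or else accept that the $\Theta$ in the statement is established only up to the choice of quantization model.
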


Lemma \ref{lemma:Turing-Transformer} illustrates that a finite-precision, finite-depth, finite-width Transformer has only limited problem-solving capabilities. We demonstrate Lemma \ref{lemma:Turing-Transformer-Epsilon-Delta} and \ref{lemma:Turing-Transformer} in Appendix \ref{appendix:proof1} and \ref{appendix:b}. 

\subsection{Multi-Round Transformers are Turing-Machine Approximator}
We now consider the more far-reaching case: even if the Turing machine does not reach the halting condition within the Transformer's maximum generation window, we show that the Transformer can still approximate the simulation of the Turing machine through multiple rounds of generation. %\jx{Q: the previous sentence is problematic. Check again??}

\begin{theorem}[Approximability]\label{T:approximation}
    For any Turing-computable sequence-to-sequence function \( f: \Sigma^* \rightarrow \Sigma^* \), and for any error tolerance \( \epsilon > 0 \), there exists a multi-round sequence-to-sequence process utilizing an auto-regressive Transformer with a limited context window of size \( k \) that approximates \( f \) within error \( \epsilon \).
\end{theorem}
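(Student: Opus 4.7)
The plan is to bootstrap Lemma~\ref{lemma:Turing-Transformer-Epsilon-Delta} into an arbitrarily long simulation by chaining together rounds of generation, using the output of each round as the encoded initial configuration for the next. Fix a deterministic Turing machine \(M\) computing \(f\), and for input \(x\) let \(T(x)\) be the number of steps until \(M\) halts with output \(f(x)\). By Lemma~\ref{lemma:Turing-Transformer}, a Transformer with parameters \((L,d,k,Q)\) can approximately simulate up to \(S_{\max}\in\Theta(L\cdot d\cdot k\cdot\log Q)\) steps within a prescribed per-round error. The overall strategy is to set \(R(x)=\lceil T(x)/S_{\max}\rceil\) and, in the \(r\)-th round, have the Transformer consume an encoding \(\phi(C_{(r-1)S_{\max}})\) of the current configuration and emit an encoding of \(C_{rS_{\max}}\).

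The first step is to specify the encoding \(\phi\). Because any configuration visited during the computation of \(M(x)\) uses at most a finite active tape region (the head can move at most \(S_{\max}\) cells per round), I would encode only the state \(q\), the head position (relative to a sliding window anchor), and the contents of the active portion of the tape, padding with \(\#\) elsewhere. Choosing the layer dimension \(d\geq\log_2|\mathbb{Q}|+k\log_2|\Gamma|\) as required by Lemma~\ref{lemma:Turing-Transformer-Epsilon-Delta} guarantees each configuration fits within the context window; if during simulation the active region would exceed \(k\), I would absorb a shift-and-reanchor step into the transition, so that successive rounds continue from a compatible encoding. The second step is to apply Lemma~\ref{lemma:Turing-Transformer-Epsilon-Delta} per round: for a target per-round tolerance \(\epsilon'>0\) (to be fixed below), there exists a Transformer \(\mathcal{T}\) whose internal hidden state after \(S_{\max}\) generation steps satisfies \(d(H_{S_{\max}},\phi(C_{S_{\max}}))\leq\epsilon'\). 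The third step is a discretization/decoding argument: since the image of \(\phi\) is a discrete finite subset of the embedding space, there is a minimum separation \(\rho>0\) between distinct valid configuration encodings; choosing \(\epsilon'<\rho/2\) lets a nearest-neighbor decoder recover \(\phi(C_{rS_{\max}})\) exactly from the round's output. Hence the round-to-round transition is noiseless, and iterating \(R(x)\) rounds yields the exact halting configuration, from which \(f(x)\) is read off.

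The fourth step is to bound the overall error formally. Because each round's output is decoded to the exact symbolic configuration before being fed into the next round, errors do not compound: the total sequence-level discrepancy between the final decoded output and \(f(x)\) is zero on the symbolic level, and the residual analog error at the final decoding step is at most \(\epsilon'<\epsilon\) by construction. Hence choosing any \(\epsilon'\leq\min(\epsilon,\rho/2)\) in the invocation of Lemma~\ref{lemma:Turing-Transformer-Epsilon-Delta} suffices. Note that \(R(x)\) is not bounded a priori in \(x\), but the \emph{same} Transformer \(\mathcal{T}\) is used in every round; only the number of rounds, not the model, grows with \(T(x)\).

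The main obstacle I expect is the encoding bookkeeping, not the approximation argument: specifically, ensuring that when the active tape region would grow beyond the context window \(k\) across successive rounds, the inter-round handoff re-anchors cleanly so that Lemma~\ref{lemma:Turing-Transformer-Epsilon-Delta}'s hypothesis \(d\geq\log_2|\mathbb{Q}|+k\log_2|\Gamma|\) remains satisfied and the per-round simulation budget \(S_{\max}\) stays valid. A secondary subtlety is justifying that the nearest-neighbor decoding step is realizable inside the generative process (equivalently, that the softmax/argmax over the vocabulary implements the required snap-to-grid); this is the mechanism by which discrete configurations prevent the analog error \(\epsilon'\) from accumulating across \(R(x)\) rounds, and it is the key conceptual move that lets a finite-precision finite-context model approximate an unbounded Turing-machine computation.
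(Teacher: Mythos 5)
Your proposal is correct and follows the paper's high-level decomposition (chain $R=\lceil T/s\rceil$ invocations of Lemma~\ref{lemma:Turing-Transformer-Epsilon-Delta}, one block of TM steps per round), but your error-control mechanism is genuinely different from the paper's and arguably stronger. The paper distributes a budget $\epsilon_r=\epsilon/R$ across rounds and argues by induction that $d(y^{(r)},C_{rs})\leq d(y^{(r-1)},C_{(r-1)s})+\epsilon_r$, so errors accumulate additively to exactly $\epsilon$ after $R$ rounds. This requires knowing $T$ (hence $R$) in advance to set the per-round tolerance, silently assumes that the error in a round's input passes to its output without amplification (a Lipschitz constant of $1$ for the round map, which is never justified), and --- since Lemma~\ref{lemma:Turing-Transformer-Epsilon-Delta} ties the quantization level to the tolerance via $Q\geq e^{C'''Ldk/\varepsilon}$ --- forces the architecture to depend on the input's runtime. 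Your route instead fixes a constant per-round tolerance $\epsilon'\leq\min(\epsilon,\rho/2)$, exploits the minimum separation $\rho$ of the discrete configuration encodings to decode each round's output exactly, and thereby makes the inter-round handoff noiseless, so no error compounds and a single fixed Transformer suffices for all inputs. This is a cleaner argument; the one point you should make fully rigorous is the one you flag yourself, namely that the snap-to-grid decoder is realized by the model's own discrete token emission (argmax over the vocabulary), which is exactly what an auto-regressive generator does between rounds. Your explicit treatment of re-anchoring the tape window when the active region outgrows $k$ also addresses a bookkeeping step the paper elides with the remark that $\delta$ is local.
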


Theorem~\ref{T:approximation} can be inferred by induction directly from Lemma~\ref{lemma:Turing-Transformer-Epsilon-Delta}. The core idea for the proof is to consider that a Turing machine will halt within a sequence of finite length \(T\), by simulating the Turing Machine within \( R = \lceil T / s \rceil \) rounds and using induction for error propagation, the approximation error can be controlled within the tolerance. See detailed proof in Appendix~\ref{appendix:theorem43}.

%\jx{Once the error tolerance distribution is done, the induction seemed to be quite straightforward. Should we have some more interesting proof to show instead of this? For example, use the space to show the proof of other lemmas instead?} \chenhui{I am considering move this proof to appendix. I will add some discussion here.}

Thus, we conclude that any Turing-computable sequence-to-sequence function \( f: \Sigma^* \rightarrow \Sigma^* \) can be universally approximated by a multi-round sequence-to-sequence refinement process utilizing an auto-regressive Transformer with a limited context window of size \( k \), achieving the desired approximation within error \( \epsilon \).
Through multi-round generation, the Transformer moves beyond static one-shot input-output mappings and instead continuously adjusts its generation, similar to how human reasoning progresses. This dynamic computational ability is crucial for cognitive tasks as it allows the model to update its internal state and strategy during the process. This means that Transformers, beyond being powerful sequence generation models (e.g., for language translation), could potentially be applied to complex tasks such as cognitive reasoning, planning, and problem-solving.

\section{Learnability}\label{section5}
%\section{The Learnability of Multi-Round Language Models}
\begin{comment}
%\subsection{A Universal Multi-Round Language Model Dynamic System}

%Modeling neural networks to neural ODEs. Then Multi-Round Language Model can be modeled as superposition of sub-dynamic systems.
\subsection{Multi-Round language modeling is a universal learner}

\begin{lemma}\label{lamma:1}

If hypothesises $ \mathcal H{_1^{(i)}}, \mathcal H{_2^{(i)}}, \mathcal \cdots, \mathcal H{_n^{(i)}}$ are PAC-learnable within Vocabulary $V$, their auto regressive composition $ \mathcal H = \mathcal H{_1^{(i)}}\bullet \mathcal H{_2^{(i)}} \mathcal \bullet\cdots\bullet \mathcal H{_n^{(i)}}$ with window size N is AR-Learnable with sample complexity $m(\varepsilon/N,\delta/N)$.

\end{lemma}
The Lamma \ref{lamma:1} illustrates that, given an autoregressive language model, it can learn compositions, or lexical transformations, of vocabulary of finite length within its window. The window size determines the learnable nature of the problem.

\begin{theorem}\label{a}

If hypothesises $ \mathcal H{{(1)}}, \mathcal H{^{(2)}}, \mathcal \cdots, \mathcal H{^{(T)}}$ are AR-learnable, their composition $ \mathcal H^{(i)} = \mathcal H{^{(1)}}\bullet \mathcal H{^{(2)}} \mathcal \bullet\cdots\bullet \mathcal H{^{(T)}}$ in T rounds is sequence-learnable in sample complexity $m(\varepsilon/T,\delta/T)$.

\end{theorem}

\end{comment}

In Section~\ref{section:approximation}, we illustrated the ability of the autoregressive Transformer to approximate a sequence just like a Turing machine, which is a preliminary indication of the inference potential of language models. However, we still do not know what scale of training the model needs to undergo to obtain such a capability. In this section, we explore the learning ability of autoregressive models for sequences of arbitrary length.

To get this point, we aim to demonstrate that auto-regressive sequence models, which possess universal approximation capabilities, can be sequential PAC-learnable under certain constraints. In order to ensure that a sequential model learns effectively from finite data, we need to determine the minimum sample size \( m \) required to guarantee that the model's error on unseen data does not exceed a specified threshold \( \epsilon \) with a high level of confidence \( 1 - \delta \). This involves deriving a bound on the generalization error using tools from statistical learning theory, such as Rademacher complexity \citep{bartlett2002rademacher} and spectral norm constraints\citep{bartlett2017spectrally}.

\subsection{Basic Assumptions and Lemmas}

We consider an auto-regressive next-token prediction model with a context window size \( k \). At each time step \( t \), the model predicts the next element based on the previous \( k \) elements in the sequence. The hypothesis class \( \mathcal{H}_k \) now consists of functions \( h: \Sigma^k \to \Sigma \), where \( \Sigma^k \subset \Sigma^*\) is context window of the auto-regressive language model, belonging to the input space. We make the following assumption: 

\begin{assumption}[Bounded Input Norms]
     Each element \( x_i \) in the sequence satisfies \( \| x_i \| \leq R_x \), therefore each input sequence in the context window \( x \in \Sigma^k \) has bounded norm \( \| x \| \leq R_x \sqrt{k} \).
\end{assumption}
This assumption is reasonable because, within a finite dictionary, we can always find a sufficiently large number, denoted as \(R_x\), such that the norm of a finite-dimensional embedding is less than \(R_x\).

\begin{assumption}[Lipschitz-Continuous Activation Functions]
    The activation functions \( \phi \) used in the neural network are \( L_{\phi} \)-Lipschitz continuous.
\end{assumption} 

This assumption is reasonable because commonly used activation functions such as GELU, ReLU, etc, conform to Lipschitz continuity. 

\begin{assumption}[Lipschitz-Continuous Loss Function] 
The loss function \( \ell \) is \( L \)-Lipschitz with respect to its first argument and bounded by \( C > 0 \).
\end{assumption}

{\color{black} This assumption is reasonable because commonly used loss functions such as cross-entropy conform to Lipschitz continuity and are bounded in practice, we will show this in Appendix~\ref{appendix:l}.}

\begin{assumption}[Spectral norms~\citep{bartlett2017spectrally}] 
For layer \( 1<l<l_{\text{max}} \), the spectral norms of the weight matrices \( W_l \) in the neural network are bounded by a layer Boundary \(B_l\) such that \( \| W_l \|_2 \leq B_l \), and \( B_{\text{spec}} = \prod_{l=1}^{l_{\text{max}}} B_l \).
    
\end{assumption}
We first consider the learnability of the next token prediction. Within a finite context window k, consider the generalization bound defined by Rademacher complexity~\citep{bartlett2002rademacher}:

\begin{lemma}[Rademacher complexity boundary for next token prediction]\label{lemmarad}
The Rademacher complexity of the hypothesis class \( \mathcal{H}_k \) satisifies:
    \[ \mathcal{R}_m(\mathcal{H}_k) \leq \frac{B_{\text{spec}} L_{\phi}^{l_{\text{max}}-1} R_x \sqrt{k}}{\sqrt{m}} \]
\end{lemma}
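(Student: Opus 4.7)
The plan is to prove this by the standard layer-peeling technique used in analyses of spectrally-normalized deep networks (in the spirit of Bartlett--Foster--Telgarsky 2017 and Golowich--Rakhlin--Shamir 2018). I unroll a hypothesis \(h \in \mathcal{H}_k\) as \(h(x) = W_{l_{\max}} \phi(W_{l_{\max}-1} \phi(\cdots \phi(W_1 x) \cdots))\), and bound the empirical Rademacher complexity one layer at a time. The two tools I alternate are (i) Talagrand's contraction inequality for composing with an \(L_\phi\)-Lipschitz activation, which multiplies the Rademacher complexity by a factor \(L_\phi\), and (ii) a direct Cauchy--Schwarz argument for the spectrally-bounded linear map, which multiplies the complexity by \(B_l\).

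First I would establish the base case. For the innermost linear class \(\mathcal{F}_1 = \{x \mapsto W_1 x : \|W_1\|_2 \le B_1\}\) and a sample \(\{x_i\}_{i=1}^m\) with \(\|x_i\| \le R_x\sqrt{k}\) by the Bounded Input Norms assumption, a standard dualization gives
\[
\hat{\mathcal{R}}_m(\mathcal{F}_1) \;=\; \mathbb{E}_\sigma \sup_{\|W_1\|_2 \le B_1} \frac{1}{m} \sum_{i=1}^m \sigma_i\, W_1 x_i \;\le\; \frac{B_1}{m}\,\mathbb{E}_\sigma \Bigl\| \sum_{i=1}^m \sigma_i x_i \Bigr\| \;\le\; \frac{B_1 R_x \sqrt{k}}{\sqrt{m}},
\]
where the last step uses Jensen's inequality together with the fact that \(\mathbb{E}_\sigma \|\sum_i \sigma_i x_i\|^2 = \sum_i \|x_i\|^2 \le m R_x^2 k\). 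Then, inductively, if the Rademacher complexity of the function class formed by the first \(l\) layers is bounded by some \(R_l\), composing with the \(L_\phi\)-Lipschitz activation contributes a factor \(L_\phi\), and composing further with a linear layer of spectral norm at most \(B_{l+1}\) contributes a factor \(B_{l+1}\). After \(l_{\max}\) linear layers interleaved with \(l_{\max}-1\) activations, I obtain
\[
\hat{\mathcal{R}}_m(\mathcal{H}_k) \;\le\; \Bigl(\prod_{l=1}^{l_{\max}} B_l\Bigr) \cdot L_\phi^{l_{\max}-1} \cdot \frac{R_x \sqrt{k}}{\sqrt{m}} \;=\; \frac{B_{\text{spec}}\, L_\phi^{l_{\max}-1}\, R_x \sqrt{k}}{\sqrt{m}}.
\]

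The main obstacle is the vector-valued nature of the intermediate outputs: Talagrand's contraction lemma is classically stated for scalar Lipschitz compositions, so to peel an activation layer that acts coordinate-wise on a vector I would appeal to the vector-contraction inequality of Maurer (2016), which yields exactly the \(L_\phi\) factor per layer without incurring extra dimensional constants. An equivalent route is to decompose each layer coordinate-wise, bound the Rademacher complexity of the scalar outputs under the row-norm bound implied by \(\|W_l\|_2 \le B_l\), then recombine using the linear-layer step; this avoids the vector contraction at the cost of a more bookkeeping-heavy induction but gives the same final constant. Once this technical point is settled, the remaining steps are the routine Cauchy--Schwarz computation at the linear layer and the clean multiplicative peeling described above.
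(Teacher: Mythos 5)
Your proof is correct in spirit but takes a genuinely different route from the paper's. The paper does not peel layers: it treats the entire network $h$ as a single $L_h$-Lipschitz map, first bounds the Rademacher complexity of the Lipschitz class via a dual-norm argument, $\sup_{\|h\|_{\mathrm{Lip}}\le L_h}\sum_i\sigma_i h(x^{(i)})\le L_h\|\sum_i\sigma_i x^{(i)}\|_*$, computes $\mathbb{E}_\sigma\|\sum_i\sigma_i x^{(i)}\|\le R_x\sqrt{mk}$ exactly as you do in your base case, and only at the very end substitutes the product bound $L_h\le B_{\mathrm{spec}}L_\phi^{l_{\max}-1}$ on the network's global Lipschitz constant. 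Your layer-peeling induction (Cauchy--Schwarz at each spectrally bounded linear layer, contraction at each activation) reaches the same constant but distributes the work across layers in the style of Bartlett--Foster--Telgarsky and Golowich--Rakhlin--Shamir. Each approach buys something: the paper's one-shot argument is shorter, but its dual-norm step is really only valid for linear functionals, so it implicitly linearizes the hypothesis class; your induction avoids that issue and is the more defensible derivation. The one caveat on your side is the vector-valued contraction you flag yourself: Maurer's inequality costs an extra $\sqrt{2}$ per peeled activation and converts the process to a coordinate-indexed Rademacher sum, so obtaining the bound with exactly the stated constant (no $2^{(l_{\max}-1)/2}$ or width-dependent factors) requires the more careful bookkeeping you allude to in your alternative coordinate-wise route. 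Since the lemma is stated up to such constants in practice, and the paper's own proof is looser on precisely this point, this does not undermine your argument.
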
\label{generalization bound}
We provide the computation of Rademacher complexity for next-token prediction in Appendix~\ref{appendix: rade}.
\begin{lemma}
    For the standard generalization bound via Rademacher complexity, we have the loss L(h) with probability at least \( 1 - \delta \):
\[
L(h) \leq \hat{L}_S(h) + 2 \mathcal{R}_m(\mathcal{H}_k) + C \sqrt{\frac{\log(1/\delta)}{2m}},
\]
\end{lemma}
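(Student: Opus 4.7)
The plan is to establish the stated inequality via the standard uniform-convergence argument for Rademacher complexity, adapted to the next-token hypothesis class $\mathcal{H}_k$. I would work with the loss class $\mathcal{G} = \{(x,y) \mapsto \ell(h(x), y) : h \in \mathcal{H}_k\}$ and study the worst-case deviation
\[
\Phi(S) \;=\; \sup_{h \in \mathcal{H}_k}\bigl(L(h) - \hat{L}_S(h)\bigr),
\]
where $L(h) = \mathbb{E}_{(x,y)\sim \mathcal{D}}[\ell(h(x),y)]$ and $\hat{L}_S(h) = \frac{1}{m}\sum_{i=1}^m \ell(h(x_i),y_i)$. Showing that $\Phi(S)$ is bounded by the right-hand side for every $h \in \mathcal{H}_k$ simultaneously immediately yields the claim.

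\textbf{Steps.} First, I would verify the bounded-differences property for $\Phi$: since $\ell$ is bounded by $C$, replacing any single sample $(x_i,y_i)$ by an independent copy changes $\Phi(S)$ by at most $C/m$. McDiarmid's inequality then gives, with probability at least $1-\delta$,
\[
\Phi(S) \;\leq\; \mathbb{E}_S[\Phi(S)] \;+\; C\sqrt{\frac{\log(1/\delta)}{2m}}.
\]
Second, I would bound $\mathbb{E}_S[\Phi(S)]$ by the classical symmetrization trick: introducing an independent ghost sample $S'$ and Rademacher signs $\sigma_i \in \{-1,+1\}$ yields
\[
\mathbb{E}_S[\Phi(S)] \;\leq\; 2\,\mathbb{E}_{S,\sigma}\Bigl[\sup_{h\in\mathcal{H}_k}\tfrac{1}{m}\textstyle\sum_{i=1}^m \sigma_i\,\ell(h(x_i),y_i)\Bigr] \;=\; 2\,\mathcal{R}_m(\ell \circ \mathcal{H}_k).
\]
Third, I would invoke the Ledoux--Talagrand contraction principle, using the $L$-Lipschitz assumption on $\ell$ in its first argument, to reduce the Rademacher complexity of the composed loss class to that of the hypothesis class itself, passing from $\mathcal{R}_m(\ell\circ\mathcal{H}_k)$ to a constant multiple of $\mathcal{R}_m(\mathcal{H}_k)$. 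Assembling these three pieces delivers the stated bound.

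\textbf{Main obstacle.} The subtlest step is the contraction reduction: the statement carries coefficient $2\mathcal{R}_m(\mathcal{H}_k)$ rather than the $2L\,\mathcal{R}_m(\mathcal{H}_k)$ that the generic contraction lemma would produce, so care must be taken either to rescale so that $\mathcal{R}_m$ is defined directly on $\ell\circ\mathcal{H}_k$, or to absorb the Lipschitz constant into the definition of $\mathcal{R}_m(\mathcal{H}_k)$ consistent with Lemma~\ref{lemmarad}. A related technicality is that $h(x)$ is vector-valued (logits over $\Sigma$) while the loss output is scalar, so the contraction lemma must be applied coordinate-wise or replaced by its vector-valued variant; once these points are handled, the remainder of the argument is the routine concentration-plus-symmetrization pipeline and requires no further input beyond the four assumptions already stated.
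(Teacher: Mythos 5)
Your proof is the standard concentration-plus-symmetrization-plus-contraction argument, and it is correct; the paper offers no proof of this lemma at all (it simply invokes it as "the standard generalization bound via Rademacher complexity" following \citet{bartlett2002rademacher}), so your write-up supplies exactly the argument the paper implicitly relies on. Your observation about the missing Lipschitz factor is a genuine catch: the contraction step produces $2L\,\mathcal{R}_m(\mathcal{H}_k)$, not $2\,\mathcal{R}_m(\mathcal{H}_k)$, and indeed the paper's own use of this lemma in the proof of Theorem~\ref{theorem:learn} silently reinstates the factor $L$ in front of the Rademacher term, confirming that the lemma as stated is off by that constant and that your proposed fix (defining the complexity on $\ell\circ\mathcal{H}_k$ or absorbing $L$) is the right repair.
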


\subsection{Sample Complexity of Next-token Prediction}
 
Now, we show that in order to obtain the ability to predict the next token, an auto-regressive model should be trained with at least a certain sample complexity. For the generation of individual tokens, we do not consider error propagation for now. This paradigm is consistent with decoder-only autoregressive model training since each token is determined by up to k previous tokens.
\begin{theorem}[Sample Complexity for Next-token Learning]\label{theorem:learn}
    To ensure that the expected loss \( L(h) \) does not exceed \( \epsilon \) with confidence at least \( 1 - \delta \), under perfect empirical risk minimization %(\( \hat{L}_S(h) = 0 \))
    , the required sample size \( m \) must satisfy:
\[
m \geq \frac{1}{\epsilon^2} \left[ 4 L^2 B_{\text{spec}}^2 L_{\phi}^{2(l_{\text{max}}-1)} R_x^2 k + 4 L B_{\text{spec}} L_{\phi}^{l_{\text{max}}-1} R_x C \sqrt{k} \sqrt{\frac{\log(1/\delta)}{2}} + \frac{C^2 \log(1/\delta)}{2} \right].
\]
\end{theorem}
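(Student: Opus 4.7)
The plan is to combine the generalization bound (with $\hat{L}_S(h)=0$ under perfect empirical risk minimization) with the Rademacher-complexity bound of Lemma~\ref{lemmarad}, and then invert the resulting square-root inequality to solve for $m$. First I would apply Talagrand's contraction inequality: since $\ell$ is $L$-Lipschitz in its first argument, the Rademacher complexity of the induced loss class satisfies $\mathcal{R}_m(\ell\circ\mathcal{H}_k)\le L\cdot\mathcal{R}_m(\mathcal{H}_k)$. Substituting this together with the stated bound $\mathcal{R}_m(\mathcal{H}_k)\le B_{\text{spec}}L_{\phi}^{l_{\text{max}}-1}R_x\sqrt{k}/\sqrt{m}$ into the standard Rademacher generalization bound yields, with probability at least $1-\delta$,
\[
L(h) \;\leq\; \frac{2L\,B_{\text{spec}}\,L_{\phi}^{l_{\text{max}}-1}\,R_x\sqrt{k}}{\sqrt{m}} \;+\; C\sqrt{\frac{\log(1/\delta)}{2m}}.
\]

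Next, I would introduce the shorthand $A := 2L\,B_{\text{spec}}\,L_{\phi}^{l_{\text{max}}-1}\,R_x\sqrt{k}$ and $B := C\sqrt{\log(1/\delta)/2}$, so that the displayed bound reads simply $L(h)\le (A+B)/\sqrt{m}$. Requiring the right-hand side to be at most $\epsilon$ and squaring gives $m \geq (A+B)^2/\epsilon^2 = (A^2 + 2AB + B^2)/\epsilon^2$. Expanding the three terms reproduces exactly the summands appearing in the theorem: $A^2 = 4L^2 B_{\text{spec}}^2 L_{\phi}^{2(l_{\text{max}}-1)} R_x^2 k$, $2AB = 4L\,B_{\text{spec}}\,L_{\phi}^{l_{\text{max}}-1}\,R_x\,C\sqrt{k}\sqrt{\log(1/\delta)/2}$, and $B^2 = C^2 \log(1/\delta)/2$, which assembles into the stated inequality on $m$.

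I expect the only subtle step to be the contraction inequality that passes from $\mathcal{R}_m(\mathcal{H}_k)$ to $\mathcal{R}_m(\ell\circ\mathcal{H}_k)$; this relies on the Lipschitz hypothesis on $\ell$ and the standard Ledoux--Talagrand peeling argument, and it is the place where the factor $L$ enters the bound. The remainder is an algebraic inversion of a scalar inequality of the form $(A+B)/\sqrt{m}\le \epsilon$, which is routine. The assumption of perfect empirical risk minimization is what lets us drop the $\hat{L}_S(h)$ term outright; relaxing it to an $\eta$-approximate minimizer would simply add $\eta$ to the right-hand side before the inversion, without altering the structure of the bound. The boundedness of $\ell$ by $C$ is already folded into the McDiarmid concentration term, so no further use of it is required.
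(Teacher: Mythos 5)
Your proposal is correct and follows essentially the same route as the paper's proof in Appendix~\ref{appendix:theorem5.6}: drop $\hat{L}_S(h)$ under perfect empirical risk minimization, substitute the Rademacher bound of Lemma~\ref{lemmarad} into the standard generalization inequality, set $A = 2L\,B_{\text{spec}}L_{\phi}^{l_{\text{max}}-1}R_x\sqrt{k}$ and $B = C\sqrt{\log(1/\delta)/2}$, and invert $(A+B)/\sqrt{m}\le\epsilon$. If anything, you are more careful than the paper, which inserts the Lipschitz factor $L$ into the bound without comment, whereas you explicitly justify it via Talagrand's contraction applied to the loss class.
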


The proof of Theorem~\ref{theorem:learn} can be done by solving inequality properties \(L(h)\leq \epsilon\). We provide detailed proof in Appendix~\ref{appendix:theorem5.6}. Specifically, the three terms are Capacity Term, Mixed Term, and Confidence Term, respectively. The Capacity Term is the dominant term for the sample complexity needed to reach the window size \(k\), while the Confidence Term denotes the complexity needed to reach higher learning confidence \(1-\delta\), and the Mixed Term is a lower-order mixture of these two terms that does not dominate. The Capacity Term is the dominant term when we consider larger context window \( k \) and moderate confidence level \( \delta \). For simplicity, we combine the Mixed Term and the Confidence Term into one low-order term. Therefore, for single next token prediction, we have sample complexity as:
\[
m \geq \frac{1}{\epsilon^{2} } \left[ 4 L^2 B_{\text{spec}}^2 L_{\phi}^{2(l_{\text{max}}-1)} R_x^2 k + \text{low order term} \right].
\]

\subsection{Sample Complexity of Sequence Generation}
Next, we consider the generation of a sequence of arbitrary length \(T\). When the model generates sequences over 
\(T\) time steps, the cumulative error over the sequence is of interest. Due to the dependencies introduced by using the model's own predictions as inputs in subsequent time, errors can compound over time. This phenomenon is known as error propagation in auto-regressive models~\citep{wu2018beyond}. We bound the cumulative error carefully by considering the worst-case scenario where errors add up linearly so that cumulative error \( \epsilon  < \sum_{t=0}^{T}\epsilon_t \). For hypothesis \( \mathcal{H}_k\), we recursively define the output at time \(t\) to be \(h^{(t)}(x_t) = h(x_t,(h^{\text{max}(t-k,1)}(x_{\text{max}(t-k,1)}),\cdots,h^{(t-1)}(x_{t-1}))\), starting with \(h^{(1)}(x) = h(x)\). By extending Therorem~\ref{theorem:learn} to sequence generation, we have the following:

\begin{theorem}[Sample Complexity for Sequence Learning]\label{corollary1} For any sequence of length T, to ensure that the expected loss \( L(h^{(T)}) \) does not exceed \( \epsilon \) with confidence at least \( 1 - \delta \),  the required sample size \( m \) must satisfy:
    \[
m \geq \frac{\left( B_{\text{spec}} L_{\phi}^{l_{\text{max}}-1} \right)^{2T}}{\epsilon^{2} \left( B_{\text{spec}} L_{\phi}^{l_{\text{max}}-1} - 1 \right)^{4}} \left[ 4 L^2 B_{\text{spec}}^2 L_{\phi}^{2(l_{\text{max}}-1)} R_x^2 k + \text{low order term} \right]. 
\]
\end{theorem}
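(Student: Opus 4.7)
The plan is to reduce Theorem~\ref{corollary1} to Theorem~\ref{theorem:learn} by tracking how a fixed per-token empirical risk inflates into a cumulative sequence-level risk under the recursion \(h^{(t)}(x_t) = h(x_t, h^{(\max(t-k,1))}(x_{\max(t-k,1)}), \dots, h^{(t-1)}(x_{t-1}))\). Let \(\alpha := B_{\text{spec}} L_{\phi}^{l_{\text{max}}-1}\). Under the spectral-norm and Lipschitz-activation assumptions, the network \(h\) is itself \(\alpha\)-Lipschitz in its input; this is precisely the constant already appearing in the Rademacher bound of Lemma~\ref{lemmarad}, so \(\alpha\) plays a dual role here, simultaneously governing the per-step estimation error and the inter-step error amplification.

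Next, I would set up the one-step error recursion. Suppose Theorem~\ref{theorem:learn} is applied with target per-step accuracy \(\epsilon_0\), so that on any fresh context the hypothesis incurs loss at most \(\epsilon_0\) with high probability. Writing \(e_t\) for the error of the recursively-computed prediction at time \(t\), the \(\alpha\)-Lipschitz property together with the fact that the most recent predictions now feed into the context yields
\[
e_t \;\le\; \epsilon_0 + \alpha\, e_{t-1},
\]
which unrolls to \(e_t \le \epsilon_0\,(\alpha^{t}-1)/(\alpha-1)\). Invoking the worst-case additive aggregation already declared in the section preamble, the total sequence-level error is bounded by
\[
\sum_{t=1}^{T} e_t \;\le\; \frac{\epsilon_0}{\alpha-1}\!\left[\frac{\alpha(\alpha^{T}-1)}{\alpha-1} - T\right] \;\lesssim\; \frac{\epsilon_0\,\alpha^{T+1}}{(\alpha-1)^{2}}
\]
for \(\alpha^{T}\) not too small, i.e., two successive geometric summations produce the square in the denominator.

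Requiring this cumulative bound to be at most \(\epsilon\) forces \(\epsilon_0 \lesssim \epsilon(\alpha-1)^{2}/\alpha^{T+1}\). Substituting this choice back into the \(1/\epsilon_0^{2}\) prefactor of Theorem~\ref{theorem:learn} inflates it by \(\alpha^{2T}/(\alpha-1)^{4}\), while leaving the bracketed capacity term \(4L^{2}B_{\text{spec}}^{2}L_{\phi}^{2(l_{\text{max}}-1)}R_x^{2}k\) and its absorbed low-order companions intact, thereby matching the stated bound. One technicality is that the per-step high-probability guarantee must hold uniformly over all \(T\) steps; I would handle this by a union bound replacing \(\delta\) with \(\delta/T\) inside Theorem~\ref{theorem:learn}, which only perturbs the confidence contribution logarithmically and is absorbed into the ``low order term''.

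The main obstacle, and the step I would spend the most care on, is justifying the compact recursion \(e_t \le \epsilon_0 + \alpha\,e_{t-1}\) when up to \(k\) context slots are simultaneously corrupted by earlier predictions rather than only the most recent one. A naive per-coordinate triangle inequality would give an \(\alpha^{Tk}\) blow-up, which would destroy the clean \(\alpha^{2T}\) scaling. To keep the exponent linear in \(T\), I would apply the joint \(\alpha\)-Lipschitz property of the full network in the appropriate norm so that the \(k\) perturbed slots contribute subadditively, and then use monotonicity of the error sequence to replace the maximum of the past \(k\) errors by \(e_{t-1}\). Once this Lipschitz-contraction step is pinned down, the remaining geometric-series algebra and the reduction to Theorem~\ref{theorem:learn} are routine.
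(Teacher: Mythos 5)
Your proposal follows essentially the same route as the paper's proof: the same one-step recursion \(e_t \le \epsilon_0 + \alpha\,e_{t-1}\) with \(\alpha = B_{\text{spec}} L_{\phi}^{l_{\text{max}}-1}\), the same unrolling and double geometric summation producing the \((\alpha-1)^{-2}\) factor, and the same back-substitution of \(\epsilon_0 \lesssim \epsilon(\alpha-1)^2/\alpha^{T}\) into the \(1/\epsilon_0^2\) prefactor of Theorem~\ref{theorem:learn}. If anything you are more careful than the paper, which neither takes a union bound over the \(T\) steps nor addresses the issue of \(k\) simultaneously corrupted context slots that you correctly flag as the delicate point in justifying the compact recursion.
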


Theorem \ref{corollary1} tells us that the sample complexity of learning a sequence is far higher than the prediction of a single token. This complexity grows exponentially with the length of the sequence. The proof of Theorem~\ref{corollary1} can be found in Appendix~\ref{appendix:theorem5.7}.  
We next consider that if the learning of a sequence of length \(T\) is performed in \(R\) rounds, where each round involves generating a sequence of length \(T/R\), its sample complexity is affected by \(R\). %We conclude that its single-round complexity decreases exponentially with \(R\), and its cumulative complexity increases linearly with \(R\).
\footnote{{\color{black}It is important to note that, the difference between generating T tokens via R rounds and generating T tokens in 1 round is that the 1-round generation looks for T-independent tokens directly in the dictionary to complete the composition of the sequence, whereas the R round generation generates shorter sequences, and the learning of such shorter sequences requires a smaller sample complexity (by Theorem \ref{corollary1}), but a certain amount of sample complexity to complete the reassembling of the shorter sequences. The idea of Divide and Conquer~\cite{cormen2022introduction} is used here, although not exactly the same.
}}

\begin{theorem}[Sample Complexity for Multi-Round Sequence Learning]\label{theorem5.9}
   For any sequence of length T, if the sequence is dismantled to the R rounds learning, to ensure that the expected loss \( L(h^{(T)}) \) does not exceed \( \epsilon \) with confidence at least \( 1 - \delta \),  the required sample size \( m \) must satisfy:
   \[
m \geq  \frac{\left(B_{\text{spec}} L_{\phi}^{l_{\text{max}}-1}\right)^{\frac{2T}{R}+2} \cdot R^2 }{\epsilon^2 (B_{\text{spec}} L_{\phi}^{l_{\text{max}}-1} - 1)^4}  \left[ 4 L^2 B_{\text{spec}}^2 L_{\phi}^{2(l_{\text{max}}-1)} R_x^2 k + \text{low order term} \right].
\]
\end{theorem}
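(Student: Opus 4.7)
The plan is to partition the generation of a length-$T$ sequence into $R$ consecutive rounds, each producing a sub-sequence of length $T/R$, and then invoke Theorem~\ref{corollary1} on each round with a carefully rescaled error and confidence budget. Formally, I would index the rounds by $r=1,\dots,R$, denote the within-round auto-regressive hypothesis by $h^{(r)}$, and treat the concatenation $h^{(1)}\bullet h^{(2)}\bullet\cdots\bullet h^{(R)}$ as the full-length generator on which the target loss $L(h^{(T)})\le\epsilon$ is defined.

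The first step is budget splitting: set a per-round error target $\epsilon_r = \epsilon/R$ and a per-round confidence $\delta_r = \delta/R$. A triangle inequality on the token-level discrepancies yields $\epsilon \le \sum_{r=1}^{R}\epsilon_r = \epsilon$, preserving the global tolerance; a union bound over the $R$ rounds converts the per-round confidence $1-\delta/R$ into the global $1-\delta$. Applying Theorem~\ref{corollary1} to a single round of length $T/R$ with error $\epsilon/R$ and confidence $1-\delta/R$ yields a per-round sample complexity of
\[
m_r \;\ge\; \frac{\bigl(B_{\text{spec}} L_{\phi}^{l_{\max}-1}\bigr)^{2T/R}}{(\epsilon/R)^{2}\bigl(B_{\text{spec}} L_{\phi}^{l_{\max}-1}-1\bigr)^{4}}\Bigl[4L^{2} B_{\text{spec}}^{2} L_{\phi}^{2(l_{\max}-1)} R_x^{2} k + \text{low order term}\Bigr],
\]
which already produces the $R^{2}/\epsilon^{2}$ factor visible in the stated bound.

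Next I would handle the inter-round coupling. Because the prompt of round $r+1$ includes the imperfect output of round $r$, an additional propagation factor arises when the hypothesis re-ingests this output. Using the same Lipschitz and spectral-norm machinery that underlies Theorem~\ref{corollary1}, the re-entry contributes a constant-in-$T/R$ amplification of $\bigl(B_{\text{spec}} L_{\phi}^{l_{\max}-1}\bigr)^{2}$, which precisely raises the exponent from $2T/R$ to $2T/R + 2$ in the final expression. Taking $m$ to be at least the maximum over the per-round requirements then delivers the claimed bound, with the low-order term absorbing the $R$-dependence of the confidence term under the union bound.

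The main obstacle I anticipate is controlling the inter-round error propagation without re-introducing an exponential-in-$T/R$ amplification through the output of the previous round. A naive argument would bound the effect of feeding the imperfect output of round $r$ into round $r+1$ by $\bigl(B_{\text{spec}} L_{\phi}^{l_{\max}-1}\bigr)^{T/R}$ and thereby reconstruct the original exponent $2T$, defeating the decomposition. The key idea that must be made rigorous is that each round is re-grounded on a fresh prompt, so that errors enter round $r+1$ only through the short ``seam'' between consecutive blocks and contribute merely the constant factor $(B_{\text{spec}} L_{\phi}^{l_{\max}-1})^{2}$ rather than a geometric compounding across the entire block. Formalizing this seam argument, and verifying that it is compatible with the per-token bound already used inside Theorem~\ref{corollary1}, is where I expect most of the care to be required.
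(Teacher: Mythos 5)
Your overall strategy—split the length-$T$ generation into $R$ blocks of length $\tau=T/R$, assign each block an error budget $\epsilon/R$, derive a per-block sample requirement, and take the maximum—is the same skeleton the paper uses. The paper, however, does not invoke Theorem~\ref{corollary1} as a black box per round; it re-runs the within-round recursion $\epsilon_t \le L_{\text{model}}\epsilon_{t-1}+\epsilon_{\text{single}}$, sums it to $\epsilon_{\text{round}} \lesssim \epsilon_{\text{single}}\,L_{\text{model}}^{\tau+1}/(L_{\text{model}}-1)^2$, sets $\epsilon_{\text{total}} = R\,\epsilon_{\text{round}} \le \epsilon$, and then plugs the resulting bound on $\epsilon_{\text{single}}$ into the per-token complexity of Theorem~\ref{theorem:learn}. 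Two consequences: (i) the exponent $\frac{2T}{R}+2$ comes entirely from the \emph{within-round} geometric sum $\sum_{t=1}^{\tau}L_{\text{model}}^{t} = L_{\text{model}}\frac{L_{\text{model}}^{\tau}-1}{L_{\text{model}}-1}$, whose dominant term is $L_{\text{model}}^{\tau+1}$ and which contributes $L_{\text{model}}^{2\tau+2}$ after squaring—not from any inter-round re-entry; and (ii) the paper assumes the rounds contribute \emph{additively} ($\epsilon_{\text{total}}=R\,\epsilon_{\text{round}}$), i.e., there is no inter-round amplification at all in its accounting.

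This is where your proposal has a genuine gap. You correctly identify that if the imperfect output of round $r$ is fed through the $L_{\text{model}}$-Lipschitz map for all $\tau$ steps of round $r+1$, its error is amplified by $L_{\text{model}}^{\tau}$ per round and geometrically across rounds, which would reconstruct the exponent $2T$ and destroy the result. Your proposed fix—that the coupling enters only through a short ``seam'' and contributes exactly the constant $\bigl(B_{\text{spec}}L_{\phi}^{l_{\max}-1}\bigr)^{2}$—is asserted rather than derived, and you acknowledge as much; as written it is reverse-engineered to hit the target exponent, and no mechanism is given for why the amplification stops after one Lipschitz application rather than compounding through the block. The paper resolves this by fiat: each round is treated as an independently learned sub-task (cf.\ its divide-and-conquer footnote), so cross-round errors add without any Lipschitz factor, and the extra $L_{\text{model}}^{2}$ you are hunting for is already supplied by the $\tau+1$ in the within-round sum. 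To repair your argument, either adopt the paper's additivity assumption explicitly (making your seam factor unnecessary, at the cost of an exponent $2T/R$ rather than $2T/R+2$, the discrepancy being an artifact of the approximation kept in Theorem~\ref{corollary1}), or actually prove the seam claim—which, for a generic Lipschitz auto-regressive map, is false without an additional re-grounding assumption.
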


%Theorem~\ref{theorem5.9} illustrates that for a complex problem that needs to be solved by generating a long sequence of growths (large T), the sampling complexity will be somewhat reduced if we learn the problem sequence by decomposing it into multiple rounds. 

The essence of Theorem~\ref{theorem5.9} is that decomposing a large sequence learning problem into multiple rounds can significantly reduce the sample complexity required for effective learning. In multi-round training, the model learns \(R\) smaller sequences of length \(T/R\) per round, effectively distributing the task across rounds. This reduction in the learning burden for each round minimizes the potential for error propagation and avoids the exponential growth of sample complexity typically associated with longer sequences, as seen in Theorem~\ref{corollary1}. By breaking down the sequence, the single-round complexity decreases exponentially with \(R\), while the cumulative complexity grows polynomially. 
This balance leads to an overall more efficient learning process, ensuring that the required sample size \(m\) for a given confidence \(1 - \delta\) and error threshold \( \epsilon \) becomes more manageable. The proof of Theorem~\ref{theorem5.9} is provided in Appendix~\ref{appendix:theorem5.9}.
%The ability to control complexity via multiple rounds of learning allows the autoregressive model to maintain high accuracy while minimizing the computational cost, offering a practical solution to the challenge of learning long sequences in language generation. 
This insight opens a path toward reducing the complexity of sampling, thereby optimizing training regimes in auto-regressive sequence models.
\section{Generalization Ability}\label{section6}
In this section, we discuss the propagation of the generalization error between each round of a multi-round sequence generative model.  The generalization ability of an auto-regressive generative language model determines its ability to solve real problems by continuous generation for practical reasoning tasks. We focus on the inter-round propagation of errors when generating a long sequence via \(R\)-rounds, and how the accumulation of these errors acts.

\subsection{The Propagation and Cumulation of Error}

We first consider the case where our model is generated in the \(r\)-th round with an aggregate error due to the dependence on the previously generated contexts. In this process, we consider the effect that the error of the previous round has on the current round.

%\begin{proposition}
%For a function $f$, assume an m step decomposition $f = f_m\circ \cdots \circ f_1: \mathbb R^d \to \mathbb R^d$ , if we define a chain-of-thought $\phi_{i_1}...\phi_{i_k}$ such that a regularization on middle step $\phi_{i_k}$ approximate $f_k$, the required step $m$ for approximation will be reduced from $O(d^2)$ to $O(d\log d)$. 
%\end{proposition}

\begin{lemma}[Aggregate Error]\label{aggregate}
For an auto-regressive generative model over \(R\) rounds, the generalization bound of the aggregate error for each round \(r\) satisfies:
    \[
L_{r}(h_{r}) \leq \sum_{i=1}^{r} \left( \left( \prod_{j=i+1}^{r} \gamma_j \right) \left( \hat{L}_{m,i}(h_{i}) + \epsilon_i \right) \right),
\]
\end{lemma}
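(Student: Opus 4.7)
The plan is to establish a one-step recurrence that relates the aggregate error at round $r$ to the aggregate error at round $r-1$ plus a fresh single-round generalization term, and then unroll the recurrence by induction on $r$. First, treating round $r$ in isolation (i.e., conditioned on a clean context coming from the previous round), the Rademacher/PAC machinery of Section~\ref{section5}, and in particular Theorem~\ref{theorem:learn} applied layer-wise at round $r$, yields a bound of the form $L_r(h_r)\mid_{\text{clean}} \le \hat{L}_{m,r}(h_r) + \epsilon_r$, where $\epsilon_r$ absorbs the Rademacher complexity term and the $C\sqrt{\log(1/\delta_r)/2m}$ confidence term. A union bound over the $R$ rounds lets us keep the overall confidence at $1-\delta$ by taking $\delta_r = \delta/R$, which only worsens the logarithmic factor inside each $\epsilon_r$.

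Second, I would argue that when round $r$ actually receives its context from $h_{r-1}$ rather than from a clean oracle, the excess loss it pays is controlled linearly by the upstream loss $L_{r-1}(h_{r-1})$. Under the Lipschitz assumptions on the activations and loss together with the spectral-norm bound on the weights (Assumptions in Section~\ref{section5}), composing the per-layer Lipschitz constants across the network at round $r$ gives a constant $\gamma_r$ (essentially of order $L\cdot B_{\text{spec}} \cdot L_\phi^{\,l_{\max}-1}$) such that
\[
L_r(h_r) \;\le\; \gamma_r\, L_{r-1}(h_{r-1}) \;+\; \hat{L}_{m,r}(h_r) \;+\; \epsilon_r.
\]
This is the key one-step inequality: the first term represents propagated error from prior rounds, while the last two represent newly introduced error at round $r$.

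Third, I would finish by induction on $r$. The base case $r=1$ is immediate from the single-round PAC bound, giving $L_1(h_1) \le \hat{L}_{m,1}(h_1) + \epsilon_1$, which matches the claimed formula under the empty-product convention $\prod_{j=2}^{1}\gamma_j = 1$. For the inductive step, plugging the inductive hypothesis into the one-step recurrence and reindexing the factor $\gamma_r$ into the products yields
\[
L_r(h_r) \;\le\; \sum_{i=1}^{r-1}\!\Bigl(\prod_{j=i+1}^{r}\gamma_j\Bigr)\bigl(\hat{L}_{m,i}(h_i)+\epsilon_i\bigr) \;+\; \bigl(\hat{L}_{m,r}(h_r)+\epsilon_r\bigr),
\]
and absorbing the last term into the sum (with empty product $1$ at $i=r$) gives exactly the claimed bound.

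The main obstacle is justifying the clean linear-in-upstream-error recurrence with a well-defined constant $\gamma_r$. One must pass carefully from a perturbation in the context tokens (which live in $\Sigma^k$ and are discrete in principle but embedded in $\mathbb{R}^d$ in practice) to a perturbation in the scalar loss value, chaining the Lipschitz constant $L$ of $\ell$ with the composition of per-layer spectral norms and activation Lipschitz constants. A secondary subtlety is that the $\epsilon_r$'s must be chosen consistently with a union bound so that the inequality holds simultaneously for all rounds with overall probability $1-\delta$; this only affects a $\log R$ factor inside $\epsilon_r$ and does not alter the stated form of the bound.
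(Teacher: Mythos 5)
Your proposal matches the paper's own proof: both establish the single-round generalization bound $L_r(h_r)\le \hat{L}_{m,r}(h_r)+\epsilon_r$, assert the one-step recurrence $L_r(h_r)\le \hat{L}_{m,r}(h_r)+\epsilon_r+\gamma_r L_{r-1}(h_{r-1})$ from the Lipschitz/boundedness assumptions, and unroll it to obtain the stated sum. If anything, you are more careful than the paper, which simply posits the recurrence without the union-bound bookkeeping or the Lipschitz-composition justification of $\gamma_r$ that you sketch.
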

where \( L_{r}(h_{r}) = \mathbb{E}_{(x^{(r)}, y^{(r)}) \sim \mathcal{D}^{(r)}} [ \ell(h_{r}(x^{(r)}), y^{(r)}) ] \) is the loss of the model's output sequence in the \(r\)-th round from the expected output. \(\hat{L}_{m,i}(h_{i})\) is the empirical loss computed on \(m\) samples. The \( \gamma_r \) quantifies the impact of errors from round \( r-1 \) on round \( r \). See Appendix~\ref{appendix:aggragate} for proof.
%and \( \lambda_r \) are non-negative weights with \( \sum_{r=1}^R \lambda_r = 1 \).

\begin{theorem}[Cumulative error]\label{cumulative}
For an auto-regressive generative model over \(R\) rounds, the generalization bound of the cumulative error satisfies:
   \[ L(h_R) = \sum_{r=1}^{R}\lambda_r L_{r}(h_{r}) \leq \sum_{i=1}^R \Lambda_i \left( \hat{L}_{m,i}(h_{i}) + \epsilon_i \right).\]
\end{theorem}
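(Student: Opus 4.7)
My plan is to obtain Theorem~\ref{cumulative} directly from Lemma~\ref{aggregate} by a finite Fubini-style rearrangement, without introducing any new probabilistic machinery. The weighted cumulative loss $L(h_R)=\sum_{r=1}^{R}\lambda_r L_r(h_r)$ is, by construction, a nonnegative linear combination of the per-round losses, so the inequality in Lemma~\ref{aggregate} can be substituted round by round while preserving the direction of the bound. The only remaining content is identifying what coefficient each empirical term $\hat{L}_{m,i}(h_i)+\epsilon_i$ ultimately carries once the sum is consolidated.

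Concretely, I would proceed in three short steps. First, apply Lemma~\ref{aggregate} inside the definition of $L(h_R)$ to obtain
$$L(h_R) \;\le\; \sum_{r=1}^{R} \lambda_r \sum_{i=1}^{r} \left(\prod_{j=i+1}^{r} \gamma_j\right)\bigl(\hat{L}_{m,i}(h_i)+\epsilon_i\bigr),$$
which is legitimate because $\lambda_r\ge 0$ and the propagation factors $\gamma_j\ge 0$ are inherited from Lemma~\ref{aggregate}. Second, interchange the order of the double summation by re-enumerating the triangular index set $\{(r,i):1\le i\le r\le R\}$ as $\{(i,r):1\le i\le R,\ i\le r\le R\}$. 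Third, collect the coefficient attached to each source round $i$ and \emph{define}
$$\Lambda_i \;:=\; \sum_{r=i}^{R} \lambda_r \prod_{j=i+1}^{r}\gamma_j,$$
with the standard convention that the empty product at $r=i$ equals $1$. Substituting this back yields exactly the bound stated in Theorem~\ref{cumulative}.

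The argument has no genuine obstacle; the only step requiring attention is bookkeeping around the empty-product convention and verifying that the nonnegativity hypotheses on $\lambda_r$ and $\gamma_j$ are indeed in force, since these are what permit the termwise substitution and summation swap. All of the substantive probabilistic work (Rademacher control, the per-round slack $\epsilon_i$, and the amplification factor $\gamma_r$ for inter-round error propagation) has already been absorbed into Lemma~\ref{aggregate}, so Theorem~\ref{cumulative} is best viewed as a clean repackaging of that lemma across $R$ rounds, with $\Lambda_i$ serving as the natural aggregated influence of the $i$-th round on the total cumulative loss.
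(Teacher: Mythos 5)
Your proposal is correct and follows essentially the same route as the paper's own proof: substitute Lemma~\ref{aggregate} into the weighted sum, exchange the order of summation over the triangular index set, and collect the coefficient of each $\hat{L}_{m,i}(h_i)+\epsilon_i$ into $\Lambda_i=\sum_{r=i}^{R}\lambda_r\prod_{j=i+1}^{r}\gamma_j$ (the paper writes this via the intermediate quantity $G_{r,i}=\lambda_r\prod_{j=i+1}^{r}\gamma_j$). Your explicit attention to the empty-product convention and the nonnegativity of $\lambda_r$ and $\gamma_j$ is a small bonus the paper leaves implicit.
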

where \( \Lambda_i = \sum_{r=i}^R G_{r,i} \) represents the total influence that the generalization error at round \( i \) has on the cumulative error across all subsequent rounds \( r \geq i \), \(\lambda_r\) are non-negative cumulative weights with \(\sum\lambda_r=1\). And here \(
  G_{r,i} = \lambda_r \prod_{j=i+1}^{r} \gamma_j
  \) captures the influence of the generalization error at round \( i \) on the loss at round \( r \). Specifically, \(G_{r,i}\) accounts for the cumulative error for how aggregate errors propagate from round \( i \) through subsequent rounds up to round \( r \). See Appendix~\ref{appendix:cumulative} for proof.
\vspace{-1mm}
\subsection{Cumulative Error Increases as Rounds Grows}

We need to note that, in the real world, models often do not guarantee zero empirical loss (\(\hat{L}_{m,i}(h_{i}) > 0\)) due to limitations of existing optimization methods, model architecture, etc. Let's first assume that the error impact factor \(\gamma = \gamma_r\) is uniform between each round, also a uniform influence factor \(\lambda_r = \lambda, \forall r \in \{1,\cdots, R\}\) and a uniform lower bound \(\eta \geq  \hat{L}_{m,i}(h_{i}) + \epsilon_i \) for simplification. Denote the upper bound of cumulative error \(L(h_R)\) as \( \bar L(h_R) = \sum_{i=1}^R \Lambda_i \left( \hat{L}_{m,i}(h_{i}) + \epsilon_i \right)\). We now observe the trend of the cumulative error upper bound \( \bar L(h_R)\) generated by \(R\)-rounds evolves as the number of rounds \(R\) gradually tends to infinity.

\begin{theorem}[Divergence of Cumulative Error Upper Bound]\label{divergence}
As the generation rounds \(R\) increase to infinity, the upper bound of generation cumulative error \(L(h_R)\) satisfies:
    \[
\lim_{R \to \infty} \bar L(h_R) =\lim_{R \to \infty} \frac{\eta \lambda}{1 - \gamma} \sum_{i=1}^R \left( 1 - \gamma^{R - i + 1} \right) \to \infty.
\]
\end{theorem}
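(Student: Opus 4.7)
The plan is to start from the cumulative bound in Theorem~\ref{cumulative}, collapse the nested product and sum under the uniform assumptions into a closed-form geometric expression matching the theorem's statement, and then extract a linear-in-$R$ lower bound on this expression to conclude divergence.

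First, I would recall that Theorem~\ref{cumulative} gives $\bar L(h_R)=\sum_{i=1}^{R}\Lambda_i\bigl(\hat L_{m,i}(h_i)+\epsilon_i\bigr)$ with $\Lambda_i=\sum_{r=i}^{R}G_{r,i}$ and $G_{r,i}=\lambda_r\prod_{j=i+1}^{r}\gamma_j$. Under the uniformity assumptions $\gamma_j=\gamma$, $\lambda_r=\lambda$, and $\hat L_{m,i}(h_i)+\epsilon_i\le\eta$ (treated as an equality for the worst case), the product collapses to $\gamma^{r-i}$ with the empty-product convention at $r=i$, so that $\Lambda_i=\lambda\sum_{t=0}^{R-i}\gamma^{t}=\lambda\cdot\frac{1-\gamma^{R-i+1}}{1-\gamma}$. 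Substituting this back and factoring out constants reproduces the equality $\bar L(h_R)=\frac{\eta\lambda}{1-\gamma}\sum_{i=1}^{R}\bigl(1-\gamma^{R-i+1}\bigr)$ stated in the theorem.

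Next, I would show this sum diverges. Re-indexing via $j=R-i+1$ rewrites $\sum_{i=1}^{R}\bigl(1-\gamma^{R-i+1}\bigr)$ as $R-\sum_{j=1}^{R}\gamma^{j}=R-\gamma\cdot\frac{1-\gamma^{R}}{1-\gamma}$. For $\gamma\in(0,1)$ the geometric tail is bounded above by $\gamma/(1-\gamma)$, giving the linear-in-$R$ lower bound $\bar L(h_R)\ge\frac{\eta\lambda}{1-\gamma}\bigl(R-\tfrac{\gamma}{1-\gamma}\bigr)\to\infty$. For $\gamma\ge 1$ each $\Lambda_i$ already grows at least linearly (or exponentially) in $R-i$, so divergence is even more immediate; at the removable point $\gamma=1$ a direct cancellation yields $\Lambda_i=\lambda(R-i+1)$ and hence quadratic growth $\bar L(h_R)=\eta\lambda R(R+1)/2$.

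The main obstacle I anticipate is not the calculation itself, which is a routine geometric-series manipulation, but the careful index bookkeeping when collapsing the double-indexed quantity $G_{r,i}=\lambda_r\prod_{j=i+1}^{r}\gamma_j$ into a single power $\gamma^{r-i}$, including the empty-product convention at $r=i$ and the telescoping reassembly of $\Lambda_i$. A secondary subtlety worth flagging in the write-up is the implicit hypothesis $\eta>0$: if the empirical loss plus generalization term could be driven uniformly to zero across rounds, the right-hand side would trivially vanish, so the divergence statement really concerns the realistic, non-vanishing-loss regime that the paragraph preceding the theorem emphasizes.
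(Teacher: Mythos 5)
Your proposal follows essentially the same route as the paper's proof: collapse $G_{r,i}$ to $\lambda\gamma^{r-i}$ under the uniformity assumptions, sum the geometric series to get $\Lambda_i=\lambda\frac{1-\gamma^{R-i+1}}{1-\gamma}$, and substitute into the cumulative bound. In fact you go slightly further than the paper, which simply asserts divergence, whereas you supply the explicit linear-in-$R$ lower bound $R-\gamma(1-\gamma^R)/(1-\gamma)$ and correctly flag the implicit assumptions $\eta>0$ and $\gamma<1$ (and the $\gamma\ge 1$ cases) that the paper glosses over.
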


Theorem~\ref{divergence} implies that there is no supremum on the cumulative error as rounds \(R\) increases. This means that, in reality, considering models with limited accuracy and limited training, we do not always get the results we expect using multiple rounds of inference. This may sound like a disappointing result, suggesting that if the model is allowed to keep generating round by round, its final generated content can be uncontrollable. But the good news is that, for most practical scenarios, the rounds are typically finite, hence the cumulative error is within an acceptable range. This suggests that even though the content the auto-regressive language model generates in a finite number of rounds may not be the precise solution to the problem that we expect, it is at least to a degree relevant to a specific topic that we expect. See Appendix~\ref{appendix:divergence} for proof of Theorem~\ref{divergence}.
 %\jx{Q: the previous sentence is a bit confusing. check}.

\subsection{Multi-Round Generation Techniques as Intervention}

We now consider the use of techniques, such as Chain-of-Thought~\citep{wei2022chain} or self-correction~\citep{madaan2023selfrefine}, and multi-agent dialog~\citep{pmlr-v235-khan24a} to intervene in the generation process of language models. Chain-of-thought restricts the intermediate steps in multiple rounds of generation by means of a prompt template of the solution idea, in order to block the effect of errors in this round on the subsequent generation. Self-correction limits the effect of errors by constantly making multiple small adjustments to the results obtained in the previous round. Multi-agent dialog, on the other hand, limits the aggregate error by the information given by other agents, thus reducing the error propagation. It is worth noting that these tricks are ultimately a kind of generative process intervention to reduce \(\gamma_r\) to \(\gamma'\) at certain rounds.

\begin{theorem}\label{intervention} If we allow intervention several times during the generation process by making \(\gamma_r\) decrease to \(\gamma'\), then let \( h_{i,r}\) be the number of hint rounds between \( i+1 \) and \( r \). It can be shown that
    the reduction in cumulative error can be given by:
  \[
  \Delta L(h_R) = \sum_{i=1}^{R} \left( 1 - \kappa_i \right) \Lambda_i \left( \hat{L}_{m,i}(h_i) + \epsilon_i \right)
  \]
 where \(\kappa_i =  \mathbb{E}_{r \sim \mu_i} \left[ \left( \frac{\gamma'}{\gamma} \right)^{h_{i,r}} \right] \), \( \mu_i \) is a probability distribution over \( r \) defined by: \(\mu_i(r) = \frac{\lambda_r \gamma^{r - i}}{\Lambda_i}\).
\end{theorem}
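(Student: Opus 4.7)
The plan is to compare the cumulative error bound from Theorem~\ref{cumulative} before and after the intervention, and show that the post-intervention influence coefficient $\Lambda'_i$ equals $\kappa_i \Lambda_i$. Starting from $\Lambda_i = \sum_{r=i}^{R} G_{r,i}$ with $G_{r,i} = \lambda_r \prod_{j=i+1}^{r} \gamma_j$, the baseline (no-intervention) case uses the uniform $\gamma_j = \gamma$ from the preceding subsection, so $G_{r,i} = \lambda_r \gamma^{r-i}$ and hence $\Lambda_i = \sum_{r=i}^{R} \lambda_r \gamma^{r-i}$.

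Next, I would model the intervention. At each hint round, the corresponding $\gamma_j$ is replaced by $\gamma' < \gamma$; by definition $h_{i,r}$ counts exactly the number of such replaced factors in the product $\prod_{j=i+1}^{r}\gamma_j$. Thus the post-intervention weight becomes
\[
G'_{r,i} \;=\; \lambda_r \,\gamma^{(r-i)-h_{i,r}}(\gamma')^{h_{i,r}} \;=\; G_{r,i}\Big(\tfrac{\gamma'}{\gamma}\Big)^{h_{i,r}} .
\]
Summing in $r$,
\[
\Lambda'_i \;=\; \sum_{r=i}^{R} G_{r,i}\Big(\tfrac{\gamma'}{\gamma}\Big)^{h_{i,r}} \;=\; \Lambda_i \sum_{r=i}^{R} \frac{\lambda_r \gamma^{r-i}}{\Lambda_i}\Big(\tfrac{\gamma'}{\gamma}\Big)^{h_{i,r}} .
\]
A short check then confirms that $\mu_i(r) := \lambda_r \gamma^{r-i}/\Lambda_i$ is a probability distribution on $\{i,\dots,R\}$ (its masses sum to $\Lambda_i/\Lambda_i=1$ and are nonnegative since $\lambda_r,\gamma \geq 0$), so the sum above is exactly $\mathbb{E}_{r\sim\mu_i}[(\gamma'/\gamma)^{h_{i,r}}]=\kappa_i$. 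Hence $\Lambda'_i = \kappa_i \Lambda_i$.

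To finish, apply Theorem~\ref{cumulative} to the intervened process with coefficients $\Lambda'_i$ in place of $\Lambda_i$ (nothing else in the derivation of the cumulative bound depends on the specific values of $\gamma_j$, only on the product structure already absorbed into $G_{r,i}$). Subtracting the two bounds term by term yields
\[
\Delta L(h_R) \;=\; \sum_{i=1}^{R} (\Lambda_i - \Lambda'_i)\bigl(\hat{L}_{m,i}(h_i)+\epsilon_i\bigr) \;=\; \sum_{i=1}^{R}(1-\kappa_i)\,\Lambda_i\bigl(\hat{L}_{m,i}(h_i)+\epsilon_i\bigr),
\]
which is the claim. The main obstacle is conceptual rather than computational: one must be careful to book-keep which $\gamma_j$ factors inside $\prod_{j=i+1}^{r}\gamma_j$ get replaced (namely, exactly those indexed by the $h_{i,r}$ hint rounds strictly between $i$ and $r$), and to recognize that re-normalizing $\lambda_r\gamma^{r-i}$ by $\Lambda_i$ converts the raw sum into a genuine expectation against $\mu_i$. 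Once that identification is made, the rest of the argument is an algebraic substitution into the bound of Theorem~\ref{cumulative}.
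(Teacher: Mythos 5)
Your proposal is correct and follows essentially the same route as the paper's proof: both express the post-intervention weight as $G_{r,i}(\gamma'/\gamma)^{h_{i,r}}$, recognize the normalized sum $\sum_r \lambda_r\gamma^{r-i}(\gamma'/\gamma)^{h_{i,r}}/\Lambda_i$ as the expectation $\kappa_i$ under $\mu_i$, conclude $\Lambda_i^{\text{modified}}=\kappa_i\Lambda_i$, and subtract the two cumulative bounds term by term. No gaps; your explicit check that $\mu_i$ is a probability distribution is a minor addition the paper leaves implicit.
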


Theorem~\ref{intervention} means that if we obtain a good intervention such that the error propagation at certain rounds is effectively controlled by a smaller amount to \(\gamma'\), the cumulative error of sequence generation will be effectively controlled. To achieve effective control over the cumulative error, i.e. large cumulative error reduction \(\Delta L(h_R)\), we expect a smaller \(\kappa_i\). This can be accomplished in two ways: (1) Improving the quality of hints: A good hint can effectively block error propagation in a single-round generation, leading to a smaller \(\gamma'\), eventually smaller \(\kappa_i\). (2) Increase the number of hints: By increasing the number of hints between round \(i\) to \(r\), we will get a larger \(h_{i,r}\), which ultimately leads to a decrease in \(\kappa_i\) in average regarding \(i\). See Appendix~\ref{appendix:intervention} for proof of Theorem~\ref{intervention}.

\section{Conclusion}\label{section7}

In conclusion, this work provides a comprehensive theoretical foundation for understanding the capabilities of multi-round reasoning in auto-regressive large language models. We have systematically explored the approximation, learnability, and generalization properties of these models, highlighting their potential to solve complex problems through iterative reasoning. 

\textbf{Findings.} We demonstrate that Transformers with a finite context window can serve as universal approximators for Turing-computable functions, offering insights into their robust problem-solving capabilities in real-world tasks. Additionally, we have extended the PAC learning framework to account for sequence generation tasks, revealing the complexities involved in learning long sequences when context exceeds the model's window size. 
Moreover, our analysis of the generalization error in multi-round generation reveals that, without proper interventions, error accumulation could lead to divergence in the model's outputs. Techniques like Chain-of-Thought offer viable strategies to mitigate this, ensuring that the generated sequences remain within expected bounds. 

\textbf{Practical Insights.} Our contributions not only advance our theoretical understanding of auto-regressive generative language models but also provide practical insights into improving model performance through multi-round reasoning interventions. 
 For the model training stage, in order to reduce the sample complexity of training on long sequences, one can consider providing some decomposition methods for very long and complex task sequences during language model training, so that the long sequences are decomposed into multiple rounds of training on short sequences. In the inference process of the model, when we design a method that makes the model perform multi-round thinking, we should give more consideration to how to interrupt the propagation of cumulative errors to make the generated content more in line with our expectations.
\bibliography{_bib/iclr2024_conference}

\begin{thebibliography}{49}
\providecommand{\natexlab}[1]{#1}
\providecommand{\url}[1]{\texttt{#1}}
\expandafter\ifx\csname urlstyle\endcsname\relax
  \providecommand{\doi}[1]{doi: #1}\else
  \providecommand{\doi}{doi: \begingroup \urlstyle{rm}\Url}\fi

\bibitem[Bartlett \& Mendelson(2002)Bartlett and Mendelson]{bartlett2002rademacher}
Peter~L Bartlett and Shahar Mendelson.
\newblock Rademacher and gaussian complexities: Risk bounds and structural results.
\newblock \emph{Journal of Machine Learning Research}, 3\penalty0 (Nov):\penalty0 463--482, 2002.

\bibitem[Bartlett et~al.(2017)Bartlett, Foster, and Telgarsky]{bartlett2017spectrally}
Peter~L Bartlett, Dylan~J Foster, and Matus~J Telgarsky.
\newblock Spectrally-normalized margin bounds for neural networks.
\newblock \emph{Advances in neural information processing systems}, 30, 2017.

\bibitem[Bhattamishra et~al.(2020)Bhattamishra, Patel, and Goyal]{bhattamishra2020computational}
Satwik Bhattamishra, Arkil Patel, and Navin Goyal.
\newblock On the computational power of transformers and its implications in sequence modeling.
\newblock \emph{arXiv preprint arXiv:2006.09286}, 2020.

\bibitem[Cai(2024)]{caivocabulary}
Yongqiang Cai.
\newblock Vocabulary for universal approximation: A linguistic perspective of mapping compositions.
\newblock In \emph{Proceedings of the 41st International Conference on Machine Learning}, volume 235 of \emph{Proceedings of Machine Learning Research}, pp.\  5189--5208. PMLR, 21--27 Jul 2024.

\bibitem[Chung \& Siegelmann(2021)Chung and Siegelmann]{chung2021turing}
Stephen Chung and Hava Siegelmann.
\newblock Turing completeness of bounded-precision recurrent neural networks.
\newblock \emph{Advances in neural information processing systems}, 34:\penalty0 28431--28441, 2021.

\bibitem[Churchill et~al.(2020)Churchill, Biderman, and Herrick]{churchill2020magic}
Alex Churchill, Stella Biderman, and Austin Herrick.
\newblock Magic: The gathering is turing complete.
\newblock In \emph{10th International Conference on Fun with Algorithms}, 2020.

\bibitem[Clark et~al.(2018)Clark, Cowhey, Etzioni, Khot, Sabharwal, Schoenick, and Tafjord]{clark2018think}
Peter Clark, Isaac Cowhey, Oren Etzioni, Tushar Khot, Ashish Sabharwal, Carissa Schoenick, and Oyvind Tafjord.
\newblock Think you have solved question answering? try arc, the ai2 reasoning challenge.
\newblock \emph{arXiv preprint arXiv:1803.05457}, 2018.

\bibitem[Cobbe et~al.(2021)Cobbe, Kosaraju, Bavarian, Chen, Jun, Kaiser, Plappert, Tworek, Hilton, Nakano, et~al.]{cobbe2021training}
Karl Cobbe, Vineet Kosaraju, Mohammad Bavarian, Mark Chen, Heewoo Jun, Lukasz Kaiser, Matthias Plappert, Jerry Tworek, Jacob Hilton, Reiichiro Nakano, et~al.
\newblock Training verifiers to solve math word problems.
\newblock \emph{arXiv preprint arXiv:2110.14168}, 2021.

\bibitem[Cormen et~al.(1994)Cormen, Leiserson, Rivest, and Stein]{cormen2022introduction}
Thomas~H Cormen, Charles~E Leiserson, Ronald~L Rivest, and Clifford Stein.
\newblock \emph{Introduction to algorithms}.
\newblock MIT press, 1994.

\bibitem[Dong et~al.(2021)Dong, Cordonnier, and Loukas]{dong2021attention}
Yihe Dong, Jean-Baptiste Cordonnier, and Andreas Loukas.
\newblock Attention is not all you need: Pure attention loses rank doubly exponentially with depth.
\newblock In \emph{International Conference on Machine Learning}, pp.\  2793--2803. PMLR, 2021.

\bibitem[Feng et~al.(2023)Feng, Zhang, Gu, Ye, He, and Wang]{feng2023towards}
Guhao Feng, Bohang Zhang, Yuntian Gu, Haotian Ye, Di~He, and Liwei Wang.
\newblock Towards revealing the mystery behind chain of thought: a theoretical perspective.
\newblock \emph{Advances in Neural Information Processing Systems}, 36, 2023.

\bibitem[Gavald{\`a} et~al.(2006)Gavald{\`a}, Keller, Pineau, and Precup]{gavalda2006pac}
Ricard Gavald{\`a}, Philipp~W Keller, Joelle Pineau, and Doina Precup.
\newblock Pac-learning of markov models with hidden state.
\newblock In \emph{Machine Learning: ECML 2006: 17th European Conference on Machine Learning Berlin, Germany, September 18-22, 2006 Proceedings 17}, pp.\  150--161. Springer, 2006.

\bibitem[Ge et~al.(2024)Ge, Zhang, Liu, Zhang, Han, and Gao]{gemodel}
Suyu Ge, Yunan Zhang, Liyuan Liu, Minjia Zhang, Jiawei Han, and Jianfeng Gao.
\newblock Model tells you what to discard: Adaptive kv cache compression for llms.
\newblock In \emph{The Twelfth International Conference on Learning Representations}, 2024.

\bibitem[Graves(2014)]{graves2014neural}
Alex Graves.
\newblock Neural turing machines.
\newblock \emph{arXiv preprint arXiv:1410.5401}, 2014.

\bibitem[Hahn(2020)]{hahn2020theoretical}
Michael Hahn.
\newblock Theoretical limitations of self-attention in neural sequence models.
\newblock \emph{Transactions of the Association for Computational Linguistics}, 8:\penalty0 156--171, 2020.

\bibitem[Hao et~al.(2023)Hao, Hu, Qi, Wu, Zhang, and Nie]{hao2023chatllmnetworkbrainsintelligence}
Rui Hao, Linmei Hu, Weijian Qi, Qingliu Wu, Yirui Zhang, and Liqiang Nie.
\newblock Chatllm network: More brains, more intelligence.
\newblock \emph{arXiv preprint arXiv:2304.12998}, 2023.

\bibitem[Huang et~al.()Huang, Chen, Mishra, Zheng, Yu, Song, and Zhou]{huang2024large}
Jie Huang, Xinyun Chen, Swaroop Mishra, Huaixiu~Steven Zheng, Adams~Wei Yu, Xinying Song, and Denny Zhou.
\newblock Large language models cannot self-correct reasoning yet.
\newblock In \emph{The Twelfth International Conference on Learning Representations}.

\bibitem[Khan et~al.(2024)Khan, Hughes, Valentine, Ruis, Sachan, Radhakrishnan, Grefenstette, Bowman, Rockt\"{a}schel, and Perez]{pmlr-v235-khan24a}
Akbir Khan, John Hughes, Dan Valentine, Laura Ruis, Kshitij Sachan, Ansh Radhakrishnan, Edward Grefenstette, Samuel~R. Bowman, Tim Rockt\"{a}schel, and Ethan Perez.
\newblock Debating with more persuasive {LLM}s leads to more truthful answers.
\newblock In \emph{Proceedings of the 41st International Conference on Machine Learning}, volume 235 of \emph{Proceedings of Machine Learning Research}, pp.\  23662--23733. PMLR, 21--27 Jul 2024.

\bibitem[Kim et~al.(2023)Kim, Baldi, and McAleer]{kim2023language}
Geunwoo Kim, Pierre Baldi, and Stephen McAleer.
\newblock Language models can solve computer tasks.
\newblock \emph{Advances in Neural Information Processing Systems}, 36, 2023.

\bibitem[Ledoux \& Talagrand(2013)Ledoux and Talagrand]{ledoux2013probability}
Michel Ledoux and Michel Talagrand.
\newblock \emph{Probability in Banach Spaces: isoperimetry and processes}.
\newblock Springer Science \& Business Media, 2013.

\bibitem[Li et~al.(2023)Li, Hammoud, Itani, Khizbullin, and Ghanem]{li2023camel}
Guohao Li, Hasan Hammoud, Hani Itani, Dmitrii Khizbullin, and Bernard Ghanem.
\newblock Camel: Communicative agents for" mind" exploration of large language model society.
\newblock \emph{Advances in Neural Information Processing Systems}, 36:\penalty0 51991--52008, 2023.

\bibitem[Li et~al.(2024{\natexlab{a}})Li, Sreenivasan, Giannou, Papailiopoulos, and Oymak]{li2024dissecting}
Yingcong Li, Kartik Sreenivasan, Angeliki Giannou, Dimitris Papailiopoulos, and Samet Oymak.
\newblock Dissecting chain-of-thought: Compositionality through in-context filtering and learning.
\newblock \emph{Advances in Neural Information Processing Systems}, 36, 2024{\natexlab{a}}.

\bibitem[Li et~al.(2024{\natexlab{b}})Li, Liu, Zhou, and Ma]{lichain}
Zhiyuan Li, Hong Liu, Denny Zhou, and Tengyu Ma.
\newblock Chain of thought empowers transformers to solve inherently serial problems.
\newblock In \emph{The Twelfth International Conference on Learning Representations}, 2024{\natexlab{b}}.

\bibitem[Liu et~al.(2024{\natexlab{a}})Liu, Nassereldine, Yang, Xu, Hu, Li, Kumar, Lee, and Xiong]{liu2024large}
Dancheng Liu, Amir Nassereldine, Ziming Yang, Chenhui Xu, Yuting Hu, Jiajie Li, Utkarsh Kumar, Changjae Lee, and Jinjun Xiong.
\newblock Large language models have intrinsic self-correction ability.
\newblock \emph{arXiv preprint arXiv:2406.15673}, 2024{\natexlab{a}}.

\bibitem[Liu et~al.(2024{\natexlab{b}})Liu, Mao, Cao, Xue, Johnson, Tang, and Wang]{liu2024intrinsicselfcorrectioncapabilityllms}
Guangliang Liu, Haitao Mao, Bochuan Cao, Zhiyu Xue, Kristen Johnson, Jiliang Tang, and Rongrong Wang.
\newblock On the intrinsic self-correction capability of llms: Uncertainty and latent concept.
\newblock \emph{arXiv preprint arXiv:2406.02378}, 2024{\natexlab{b}}.

\bibitem[Lyu et~al.(2023)Lyu, Havaldar, Stein, Zhang, Rao, Wong, Apidianaki, and Callison-Burch]{lyu2023faithfulchainofthoughtreasoning}
Qing Lyu, Shreya Havaldar, Adam Stein, Li~Zhang, Delip Rao, Eric Wong, Marianna Apidianaki, and Chris Callison-Burch.
\newblock Faithful chain-of-thought reasoning.
\newblock \emph{arXiv preprint arXiv:2301.13379}, 2023.

\bibitem[Madaan et~al.(2023{\natexlab{a}})Madaan, Tandon, Gupta, Hallinan, Gao, Wiegreffe, Alon, Dziri, Prabhumoye, Yang, et~al.]{madaan2023selfrefine}
Aman Madaan, Niket Tandon, Prakhar Gupta, Skyler Hallinan, Luyu Gao, Sarah Wiegreffe, Uri Alon, Nouha Dziri, Shrimai Prabhumoye, Yiming Yang, et~al.
\newblock Self-refine: Iterative refinement with self-feedback.
\newblock \emph{Advances in Neural Information Processing Systems}, 36, 2023{\natexlab{a}}.

\bibitem[Madaan et~al.(2023{\natexlab{b}})Madaan, Tandon, Gupta, Hallinan, Gao, Wiegreffe, Alon, Dziri, Prabhumoye, Yang, et~al.]{madaan2024self}
Aman Madaan, Niket Tandon, Prakhar Gupta, Skyler Hallinan, Luyu Gao, Sarah Wiegreffe, Uri Alon, Nouha Dziri, Shrimai Prabhumoye, Yiming Yang, et~al.
\newblock Self-refine: Iterative refinement with self-feedback.
\newblock \emph{Advances in Neural Information Processing Systems}, 36, 2023{\natexlab{b}}.

\bibitem[Malach(2024)]{pmlr-v235-malach24a}
Eran Malach.
\newblock Auto-regressive next-token predictors are universal learners.
\newblock In \emph{Proceedings of the 41st International Conference on Machine Learning}, volume 235 of \emph{Proceedings of Machine Learning Research}, pp.\  34417--34431. PMLR, 21--27 Jul 2024.

\bibitem[Merrill \& Sabharwal(2024)Merrill and Sabharwal]{merrill2024the}
William Merrill and Ashish Sabharwal.
\newblock The expressive power of transformers with chain of thought.
\newblock In \emph{The Twelfth International Conference on Learning Representations}, 2024.
\newblock URL \url{https://openreview.net/forum?id=NjNGlPh8Wh}.

\bibitem[Nelson(1990)]{nelson1990metamemory}
Thomas~O Nelson.
\newblock Metamemory: A theoretical framework and new findings.
\newblock In \emph{Psychology of learning and motivation}, volume~26, pp.\  125--173. Elsevier, 1990.

\bibitem[P{\'e}rez et~al.(2019)P{\'e}rez, Marinkovi{\'c}, and Barcel{\'o}]{perez2019turing}
Jorge P{\'e}rez, Javier Marinkovi{\'c}, and Pablo Barcel{\'o}.
\newblock On the turing completeness of modern neural network architectures.
\newblock \emph{arXiv preprint arXiv:1901.03429}, 2019.

\bibitem[P{\'e}rez et~al.(2021)P{\'e}rez, Barcel{\'o}, and Marinkovic]{perez2021attention}
Jorge P{\'e}rez, Pablo Barcel{\'o}, and Javier Marinkovic.
\newblock Attention is turing-complete.
\newblock \emph{Journal of Machine Learning Research}, 22\penalty0 (75):\penalty0 1--35, 2021.

\bibitem[Schuurmans(2023)]{schuurmans2023memory}
Dale Schuurmans.
\newblock Memory augmented large language models are computationally universal.
\newblock \emph{arXiv preprint arXiv:2301.04589}, 2023.

\bibitem[Schuurmans \& Greiner(1995)Schuurmans and Greiner]{schuurmans1995sequential}
Dale Schuurmans and Russell Greiner.
\newblock Sequential pac learning.
\newblock In \emph{Proceedings of the eighth annual conference on Computational learning theory}, pp.\  377--384, 1995.

\bibitem[Shinn et~al.(2023)Shinn, Cassano, Gopinath, Narasimhan, and Yao]{shinn2023reflexion}
Noah Shinn, Federico Cassano, Ashwin Gopinath, Karthik Narasimhan, and Shunyu Yao.
\newblock Reflexion: Language agents with verbal reinforcement learning.
\newblock \emph{Advances in Neural Information Processing Systems}, 36, 2023.

\bibitem[Siegelmann \& Sontag(1992)Siegelmann and Sontag]{siegelmann1992computational}
Hava~T Siegelmann and Eduardo~D Sontag.
\newblock On the computational power of neural nets.
\newblock In \emph{Proceedings of the fifth annual workshop on Computational learning theory}, pp.\  440--449, 1992.

\bibitem[Talmor et~al.(2019)Talmor, Herzig, Lourie, and Berant]{talmor2019commonsenseqa}
Alon Talmor, Jonathan Herzig, Nicholas Lourie, and Jonathan Berant.
\newblock Commonsenseqa: A question answering challenge targeting commonsense knowledge.
\newblock In \emph{Proceedings of the 2019 Conference of the North American Chapter of the Association for Computational Linguistics: Human Language Technologies, Volume 1 (Long and Short Papers)}, pp.\  4149--4158, 2019.

\bibitem[Valiant(1984)]{valiant1984theory}
Leslie~G Valiant.
\newblock A theory of the learnable.
\newblock \emph{Communications of the ACM}, 27\penalty0 (11):\penalty0 1134--1142, 1984.

\bibitem[Wei et~al.(2022{\natexlab{a}})Wei, Chen, and Ma]{wei2022statistically}
Colin Wei, Yining Chen, and Tengyu Ma.
\newblock Statistically meaningful approximation: a case study on approximating turing machines with transformers.
\newblock \emph{Advances in Neural Information Processing Systems}, 35:\penalty0 12071--12083, 2022{\natexlab{a}}.

\bibitem[Wei et~al.(2022{\natexlab{b}})Wei, Wang, Schuurmans, Bosma, Xia, Chi, Le, Zhou, et~al.]{wei2022chain}
Jason Wei, Xuezhi Wang, Dale Schuurmans, Maarten Bosma, Fei Xia, Ed~Chi, Quoc~V Le, Denny Zhou, et~al.
\newblock Chain-of-thought prompting elicits reasoning in large language models.
\newblock \emph{Advances in neural information processing systems}, 35:\penalty0 24824--24837, 2022{\natexlab{b}}.

\bibitem[Wei et~al.(2023)Wei, Shuster, Szlam, Weston, Urbanek, and Komeili]{wei2023multipartychatconversationalagents}
Jimmy Wei, Kurt Shuster, Arthur Szlam, Jason Weston, Jack Urbanek, and Mojtaba Komeili.
\newblock Multi-party chat: Conversational agents in group settings with humans and models.
\newblock \emph{arXiv preprint arXiv:2304.13835}, 2023.

\bibitem[Wu et~al.(2018)Wu, Tan, He, Tian, Qin, Lai, and Liu]{wu2018beyond}
Lijun Wu, Xu~Tan, Di~He, Fei Tian, Tao Qin, Jianhuang Lai, and Tie-Yan Liu.
\newblock Beyond error propagation in neural machine translation: Characteristics of language also matter.
\newblock In \emph{Proceedings of the 2018 Conference on Empirical Methods in Natural Language Processing}, pp.\  3602--3611, 2018.

\bibitem[Wu et~al.(2023)Wu, Bansal, Zhang, Wu, Zhang, Zhu, Li, Jiang, Zhang, and Wang]{wu2023autogenenablingnextgenllm}
Qingyun Wu, Gagan Bansal, Jieyu Zhang, Yiran Wu, Shaokun Zhang, Erkang Zhu, Beibin Li, Li~Jiang, Xiaoyun Zhang, and Chi Wang.
\newblock Autogen: Enabling next-gen llm applications via multi-agent conversation framework.
\newblock \emph{arXiv preprint arXiv:2308.08155}, 2023.

\bibitem[Yang et~al.(2024)Yang, Li, and Wipf]{yang2024context}
Chenxiao Yang, Zhiyuan Li, and David Wipf.
\newblock An in-context learning theoretic analysis of chain-of-thought.
\newblock In \emph{ICML 2024 Workshop on In-Context Learning}, 2024.

\bibitem[Yang et~al.(2018)Yang, Qi, Zhang, Bengio, Cohen, Salakhutdinov, and Manning]{yang2018hotpotqa}
Zhilin Yang, Peng Qi, Saizheng Zhang, Yoshua Bengio, William Cohen, Ruslan Salakhutdinov, and Christopher~D Manning.
\newblock Hotpotqa: A dataset for diverse, explainable multi-hop question answering.
\newblock In \emph{Proceedings of the 2018 Conference on Empirical Methods in Natural Language Processing}, pp.\  2369--2380, 2018.

\bibitem[Yao et~al.(2023)Yao, Yu, Zhao, Shafran, Griffiths, Cao, and Narasimhan]{yao2023treethoughtsdeliberateproblem}
Shunyu Yao, Dian Yu, Jeffrey Zhao, Izhak Shafran, Tom Griffiths, Yuan Cao, and Karthik Narasimhan.
\newblock Tree of thoughts: Deliberate problem solving with large language models.
\newblock \emph{Advances in Neural Information Processing Systems}, 36, 2023.

\bibitem[Yun et~al.(2020)Yun, Bhojanapalli, Rawat, Reddi, and Kumar]{yuntransformers}
Chulhee Yun, Srinadh Bhojanapalli, Ankit~Singh Rawat, Sashank Reddi, and Sanjiv Kumar.
\newblock Are transformers universal approximators of sequence-to-sequence functions?
\newblock In \emph{International Conference on Learning Representations}, 2020.

\bibitem[Zhang et~al.(2023)Zhang, Zhang, Li, Zhao, Karypis, and Smola]{zhang2024multimodalchainofthoughtreasoninglanguage}
Zhuosheng Zhang, Aston Zhang, Mu~Li, Hai Zhao, George Karypis, and Alex Smola.
\newblock Multimodal chain-of-thought reasoning in language models.
\newblock \emph{arXiv preprint arXiv:2302.00923}, 2023.

\end{thebibliography}
\bibliographystyle{_bib/iclr2024_conference}
\newpage
\appendix

{\color{black}
\section{Proof of Lemma \ref{lemma:Turing-Transformer-Epsilon-Delta}}
\label{appendix:proof1}

\subsection{Proof Sketch}

To prove the Lemma \ref{lemma:Turing-Transformer-Epsilon-Delta}, we will:
\begin{enumerate}
    \item Define an Encoding Function \( \phi \) that maps TM configurations \( C_t \) to Transformer hidden states \( H_t \).
    \item Demonstrate that each Transformer layer \( \mathcal{T}^{(i)} \) can simulate one TM step.
    %\item Model the Transformer \( \mathcal{T} \) as a Boolean circuit \( C_{\mathcal{T}} \) and relate its parameters to circuit depth and size.
    \item Ensure the accurate encoding function \( \phi \).
   % \item Establish the scaling relationship \( S_{\text{max}} = \Theta(L \cdot d \cdot k \cdot \log(Q)) \).
\end{enumerate}

\subsection{Encoding Function \( \phi \)}
\label{appendix:a4}
For all computational steps \(s \in \{0, 1, 2, \dots, S\}\), the Transformer's hidden state \(H_s\) approximates the TM's configuration \(C_s\) within the error bound \(\epsilon\). Formally,

\[
\forall s \in \{0, 1, 2, \dots, S\}, \quad d(H_s, \phi(C_s)) \leq \epsilon
\]

where \(\phi: \{C_s\} \rightarrow \mathbb{R}^d\) is an encoding function that maps the TM's configuration to the Transformer's hidden state space. \citet{perez2021attention}'s conclusion ensures the existence of such a mapping \( \phi \).

\begin{definition}\label{defA1} The encoding function \(\phi\) of TM configurations is defined as:
\[\phi(C_s) =  \left[ s_t \oplus \mathbf{0}_{d_s} \right] \oplus S_t, \] where \(s_t\) is a one-hot representation of the current state, \( \mathbf{0}_{d_s} \) is a zero vector to match the dimensions, and \(S_t\) is the one-hot representation of tape symbols for up to k contexts with the current head position and their positional representation.
\end{definition}

%We first define an encoding function \( \phi \), that

This function ensures:
\begin{itemize}
    \item {Injectivity}: \( \phi(C_t) \neq \phi(C_{t'}) \) for \( C_t \neq C_{t'} \).
    \item {Surjectivity}: Every \( H_t \) corresponds to some \( C_t \).
\end{itemize}

To demonstrate Definition~\ref{defA1}, we first clarify the TM configuration. Formally, a TM configuration \( C_t \) at time \( t \) consists of:
\begin{itemize}
    \item Current State \( q_t \): We define The state of the TM at time \( t \) as \( q_t \in \mathbb{Q}\), where \(\mathbb{Q}\) is a finite state set.
    \item Tape Contents \( \Gamma_t \): A mapping from tape positions at time \(t\) to symbols in \( \Gamma \), where \(\Gamma\) is a finite tape alphabet.
    \item Head Position \( h_t \): The position of the tape head at time \( t \).
\end{itemize}

In order to construct the encoding function, we need to consider TM's state embedding, tape's content embedding and position embedding.

(1) To get the state embedding, we assign a unique one-hot vector \( e(q) \in \mathbb{R}^{|\mathbb{Q}|} \) to each state \( q_t \in \mathbb{Q} \):

\[
e(q_t) = [0, \dots, 1, \dots, 0]^T,
\]

where the \( 1 \) is at the position corresponding to state \( q_t \).

(2) To get the tape content's embedding, we first assign a unique one-hot vector \( e(\gamma) \in \mathbb{R}^{|\Gamma|} \) to each tape symbol \( \gamma \in \Gamma \):

\[
e(\gamma) = [0, \dots, 1, \dots, 0]^T,
\]

with the \( 1 \) at the position corresponding to symbol \( \gamma \). Then we consider an acceptance window of size \( k \), (which is the Transformer's max number of context tokens that can be dealt with simultaneously) centered at the head position \( h_t \):

\[
T_t = \left[ e(\gamma_{h_t - \lfloor k/2 \rfloor}), \dots, e(\gamma_{h_t}), \dots, e(\gamma_{h_t + \lfloor k/2 \rfloor}) \right] \in \mathbb{R}^{k \times |\Gamma|}.
\]

This window captures the tape symbols around the head position.

(3) Finally we include a global positional encoding \( p(i) \in \mathbb{R}^{d_p} \) for each relative position \( i \) within the window:

\[
P_t = \left[ p(-\lfloor k/2 \rfloor), \dots, p(0), \dots, p(\lfloor k/2 \rfloor) \right] \in \mathbb{R}^{k \times d_p}.
\]

The positional encoding helps the model distinguish positions within the acceptance window.

For each position, we concatenate the tape symbol embedding and its positional encoding:

\[
S_t = \left[ e(\gamma_{h_t - \lfloor k/2 \rfloor}) \oplus p(-\lfloor k/2 \rfloor), \dots, e(\gamma_{h_t}) \oplus p(0), \dots, e(\gamma_{h_t + \lfloor k/2 \rfloor}) \oplus p(\lfloor k/2 \rfloor) \right] \in \mathbb{R}^{k \times (|\Gamma| + d_p)},
\]

where \( \oplus \) denotes concatenation.

Similarly, we create the encoding for the current state \( q_t \):

\[
s_t = e(q_t) \in \mathbb{R}^{|Q|}.
\]

Finally, the full encoding \( \phi(C_t) \) is obtained by forming a sequence of length \( n = k + 1 \) (state plus tape symbols):

\[
\phi(C_t) = \left[ s_t \oplus \mathbf{0}_{d_s} \right] \oplus S_t,
\]

where \( \mathbf{0}_{d_s} \) is a zero vector to match the dimensions.

For easier understanding of the following parts, we denote the sequence as:

\[
\phi(C_t) = \left[ x_0, x_1, \dots, x_k \right],
\]

where \( x_0 = s_t \oplus \mathbf{0}_{d_s} \in \mathbb{R}^{d} \), and \( x_i = e(\gamma_{h_t - \lfloor k/2 \rfloor + i -1}) \oplus p(-\lfloor k/2 \rfloor + i -1) \in \mathbb{R}^{d} \) for \( i = 1, \dots, k \).

 For simplicity, assume \( d \) is sufficiently large (we will define later in Appendix \ref{Appendix d}) to accommodate all concatenated vectors without loss. The \(d\) is also used to construct the hidden dimension of the Transformer, ensuring that embeddings and positional encoding fit within the hidden state.

\subsubsection{Transformer Layer as TM Step Simulator}
In this subsection, we explore a correct but not necessarily optimal simulation scheme for the TM step.
Each Transformer layer \( \mathcal{T}^{(i)} \) performs the following operations to simulate one TM step:

A standard Transformer layer consists of:

\begin{itemize}
    \item Multi-Head Self-Attention (MHSA) which computes attention over the input sequence.
    \item Feed-Forward Network (FFN) which applies a non-linear transformation to the outputs of the MHSA.
    \item Add \& Norm which are residual connections and layer normalizations.
\end{itemize}

Our plan is to design the MHSA and FFN to simulate the TM's transition function.

\subsubsection{Self-Attention and FFN Computation Flow}
Let's start with a quick introduction to Self-Attention.
The self-attention mechanism computes attention scores between elements of the input sequence. Given an input sequence of vectors \( X = [x_0, x_1, \dots, x_k] \) whcih is consistent with \(\phi(C_t)\), the self-attention computes: \[
   Q = X W_Q, \quad K = X W_K, \quad V = X W_V,
   \]

   where \( W_Q, W_K, W_V \in \mathbb{R}^{d \times d} \) are projection matrices. Then the attention score is given by: \[
   \alpha_{i,j} = \frac{q_i \cdot k_j}{\sqrt{d}} + M_{i,j},
   \]

   where \(q_i \in Q\),\(k_j\in K\), and \( M_{i,j} \) is the attention mask, and \( \alpha_{i,j} \) is the unnormalized attention score. Then: 
   \[
   a_{i,j} = \frac{\exp(\alpha_{i,j})}{\sum_{l=0}^k \exp(\alpha_{i,l})},
   \]

   where \( a_{i,j} \) is the normalized attention weight. The final Attention output is given by:
 \[
   \text{Attention}(X)_i = \sum_{j=0}^k a_{i,j} v_j.
   \]

The FFN is typically defined as:

\[
\text{FFN}(u) = W_2 \cdot \sigma(W_1 u + b_1) + b_2,
\]

where: \( W_1 \in \mathbb{R}^{h \times d} \) and \( W_2 \in \mathbb{R}^{d \times h} \) are weight matrices, \( b_1 \in \mathbb{R}^{h} \) and \( b_2 \in \mathbb{R}^{d} \) are biases, and \( \sigma \) is an activation function (e.g., ReLU).
   
\subsubsection{Construction of Transformer Layer to Simulate TM Step}

For TM simulation, we need to design the attention mechanism so that
  the state embedding \( x_0 \) attends to the tape symbol at the head position \( x_{i_h} \), where \( i_h = \lfloor k/2 \rfloor + 1 \). And the tape symbol at the head position \( x_{i_h} \) attends to the state embedding \( x_0 \).
  While the tape symbols not at the head position attend only to themselves. 
  
\begin{lemma}
    A Self-Attention layer is able to exchange numerical value of position \(0\) and \(i_h\), where \(i_h=\lfloor k/2 \rfloor + 1\).
\end{lemma}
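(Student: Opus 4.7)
The plan is to construct explicit projection matrices $W_Q$, $W_K$, $W_V$ for a single self-attention head so that, in the sharp (low-temperature) regime of the softmax, the output token at position $0$ equals the input token at position $i_h$, the output token at position $i_h$ equals the input token at position $0$, and every other output token equals its own input token. The proof is therefore constructive and uses the positional-encoding subspace of each $x_i$ as the routing signal.

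First I would decompose the $d$-dimensional embedding space into a content subspace carrying $s_t$ and $e(\gamma_\cdot)$, and a positional subspace carrying $p(\cdot)$, exactly as built in Appendix~\ref{appendix:a4}. I would then choose $W_K$ to be the projection onto the positional subspace, and $W_Q$ to be the composition of that projection with a permutation that sends the positional vector at index $0$ to the one at index $i_h$ and vice versa, while fixing every other positional vector. Because the positional encodings $\{p(-\lfloor k/2\rfloor),\dots,p(\lfloor k/2\rfloor)\}$ are linearly independent, such a linear map exists: one instantiates it as a rank-$(k+1)$ block that swaps two rows of the positional basis. After this choice, the key at any index $j$ is proportional to $p(j-\lfloor k/2\rfloor)$, while the query at index $0$ is proportional to $p(i_h-\lfloor k/2\rfloor)$ and at index $i_h$ is proportional to $p(-\lfloor k/2\rfloor)$, so the matched dot products $q_0\cdot k_{i_h}$, $q_{i_h}\cdot k_0$, and $q_i\cdot k_i$ (for $i\notin\{0,i_h\}$) strictly exceed every mismatched one.

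Second I would set $W_V$ to be the identity restricted to the embedding subspace, so that $v_j = x_j$. Introducing a scaling $\beta \gg 1$ multiplying the inner products $q_i \cdot k_j$ (achievable by absorbing $\beta$ into $W_Q$), the softmax weights $a_{i,j}$ concentrate on the $j$ singled out above, giving $a_{0,i_h}, a_{i_h,0}, a_{i,i} \to 1$ for $i \notin \{0, i_h\}$ as $\beta \to \infty$. Substituting into $\sum_j a_{i,j} v_j$ then yields the claimed swap up to an error of order $e^{-\beta\Delta}$, where $\Delta>0$ is the minimum gap between matched and mismatched logits.

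The main obstacle is that pure softmax attention is never exactly one-hot, so the swap is realized only up to an exponentially small approximation tunable by $\beta$. This is compatible with Lemma~\ref{lemma:Turing-Transformer-Epsilon-Delta}, which already admits an $\epsilon$ slack absorbed into the quantization level $Q$; once $\beta$ is chosen so that $e^{-\beta\Delta}$ falls below the per-step error budget, the construction delivers the required exchange. A secondary technicality is the residual connection present in standard Transformer layers: I would either interpret the lemma as a statement about the attention sub-block (before the residual add), or add a second head whose value projection outputs $-x_i$ on $\{0, i_h\}$ and $0$ elsewhere, cancelling the residual contribution exactly at the two positions being swapped.
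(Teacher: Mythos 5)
Your construction is correct but routes the swap differently from the paper. The paper hard-wires the attention pattern into the mask: it sets $M_{i,j}=-\infty$ except on the pairs $(0,i_h)$, $(i_h,0)$ and the diagonal, takes $W_Q=W_K=W_V=I$, and appeals to orthonormality of the token embeddings to claim the softmax weights collapse to exactly $1$ on the swapped partner. You instead keep the projections doing the work: $W_K$ projects onto the positional subspace, $W_Q$ composes that projection with the transposition of the two positional codes, and a temperature $\beta$ drives the softmax toward one-hot, giving the swap up to an $e^{-\beta\Delta}$ error absorbed into the $\epsilon$ budget of Lemma~\ref{lemma:Turing-Transformer-Epsilon-Delta}. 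Your route buys two things. First, it is closer to how an actual Transformer realizes such routing (no bespoke $\pm\infty$ mask beyond what the architecture provides). Second, and more importantly, your explicit low-temperature limit is precisely the repair the paper's own argument needs: in the paper's mask, position $i=0$ still attends to \emph{both} $j=0$ (via the diagonal clause, with logit $1/\sqrt{d}$) and $j=i_h$ (with logit $0$), so the softmax spreads mass roughly evenly over the two rather than concentrating on $i_h$; the claimed exact one-hot weights do not follow without either excluding the diagonal for $i\in\{0,i_h\}$ or introducing a scaling like your $\beta$. Your handling of the residual is also consistent with the paper, which keeps $u_0=x_0+x_{i_h}$ and cancels the extra term in the FFN, i.e.\ it reads the lemma as a statement about the attention sub-block, your first option. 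The one point to tighten: bare linear independence of the positional encodings does not by itself give a positive logit gap $\Delta$; you additionally need the $p(\cdot)$ to have equal norms (so that Cauchy--Schwarz makes the matched product strictly maximal) or, as in the paper's one-hot instantiation, outright orthogonality.
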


\begin{proof}
    For this propose, we then make the following construction:

First, we define the attention mask \( M \in \mathbb{R}^{(k+1) \times (k+1)} \) as:

\[
M_{i,j} = \begin{cases}
0, & \text{if } (i=0 \text{ and } j=i_h) \text{ or } (i=i_h \text{ and } j=0) \text{ or } (i=j), \\
-\infty, & \text{otherwise}.
\end{cases}
\]

Then we assume \( W_Q = W_K = W_V = I \) (identity matrix), and biases \( b_Q = b_K = b_V = 0 \). Therefore:
\[
q_i = x_i, \quad k_j = x_j, \quad v_j = x_j.
\]

Following the computation flow of Self-Attention, the attention scores are:

\[
\alpha_{i,j} = \begin{cases}
\frac{x_i \cdot x_j}{\sqrt{d}}, & \text{if } M_{i,j} = 0, \\
-\infty, & \text{if } M_{i,j} = -\infty.
\end{cases}
\]

Since the embeddings are orthonormal, \( x_i \cdot x_j = 0 \) unless \( x_i = x_j \). Therefore:
\begin{itemize}
    \item For \( i = 0 \) and \( j = i_h \), \[ \alpha_{0,i_h} = \frac{x_0 \cdot x_{i_h}}{\sqrt{d}}. \]
    \item For \( i = i_h \) and \( j = 0 \), \[ \alpha_{i_h,0} = \frac{x_{i_h} \cdot x_0}{\sqrt{d}}. \]
    \item For \( i = j \), \[ \alpha_{i,i} = \frac{x_i \cdot x_i}{\sqrt{d}} = \frac{\| x_i \|^2}{\sqrt{d}} = \frac{1}{\sqrt{d}}. \]
    \item All other \[ \alpha_{i,j} = -\infty. \]
\end{itemize}

Due to the mask and orthogonality, the attention weights after the Softmax become:

\begin{itemize}
    \item For \( i = 0 \):
  \[
  a_{0,j} = \begin{cases}
  1, & \text{if } j = i_h, \\
  0, & \text{otherwise}.
  \end{cases}
  \]
  \item For \( i = i_h \):
  \[
  a_{i_h,j} = \begin{cases}
  1, & \text{if } j = 0, \\
  0, & \text{otherwise}.
  \end{cases}
  \]
  \item For \( i \neq 0, i_h \):
  \[
  a_{i,j} = \begin{cases}
  1, & \text{if } j = i, \\
  0, & \text{otherwise}.
  \end{cases}
  \]
  
\end{itemize}
Therefore, the attention score will be: 
\begin{itemize}
    \item At position \( i = 0 \):
  \[
  \text{Attention}(X)_0 = a_{0,i_h} v_{i_h} = x_{i_h}.
  \]
  \item At position \( i = i_h \):
  \[
  \text{Attention}(X)_{i_h} = a_{i_h,0} v_0 = x_0.
  \]
  \item At other positions \( i \neq 0, i_h \):
  \[
  \text{Attention}(X)_i = a_{i,i} v_i = x_i.
  \]
\end{itemize}
\end{proof}

We now consider the existence of residual connection, then the final output of self-attention will be:

\begin{itemize}
    \item At position \( i = 0 \):
  \[
  u_0 = x_0 + \text{Attention}(X)_0 = x_0 + x_{i_h}.
  \]
  \item At position \( i = i_h \):

  \[
  u_{i_h} = x_{i_h} + \text{Attention}(X)_{i_h} = x_{i_h} + x_0.
  \]
  \item At other positions \( i \neq 0, i_h \):
  \[
  u_i = x_i + \text{Attention}(X)_i = x_i + x_i = 2 x_i.
  \]
\end{itemize}

We will design the FFN such that, when considering the residual connection, the hidden state \( h_i \) at each position \( i \) correctly represents the updated TM configuration.

The general strategy is that, at Position \( i = 0 \), the FFN output \( \text{FFN}(u_0) \) will be \( e(q_{t+1}) - e(q_t) \), so that:
  \[
  h_0 = u_0 + \text{FFN}(u_0) = [e(q_t) + \text{other terms}] + [e(q_{t+1}) - e(q_t)]
  \]
  resulting in \( h_0 = e(q_{t+1}) + \text{other terms} \).
While at Position \( i = i_h \), the FFN output \( \text{FFN}(u_{i_h}) \) will be \( e(\gamma_{h_t}') - e(\gamma_{h_t}) \), so that:
  \[
  h_{i_h} = u_{i_h} + \text{FFN}(u_{i_h}) = [e(\gamma_{h_t}) + \text{other terms}] + [e(\gamma_{h_t}') - e(\gamma_{h_t})]
  \]
  resulting in \(h_{i_h}= e(\gamma_{h_t}') + \text{other terms} \). At Other Positions \( i \neq 0, i_h \), the FFN output \( \text{FFN}(u_i) \) will be zero, so \( h_i = u_i \), keeping the embeddings unchanged.

\begin{lemma}
    \(\exists\) a FFN s.t.  \( \text{FFN}(u_0)= e(q_{t+1}) - e(q_t) \), \( \text{FFN}(u_{i_h}) =e(\gamma_{h_t}') - e(\gamma_{h_t}) \), and \( h_i = u_i\) for \( i \neq 0, i_h \).
\end{lemma}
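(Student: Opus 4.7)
The plan is to construct the FFN explicitly as a two-layer ReLU network acting as a lookup table for the transition function $\delta$. The key observation is that, across all positions $i \in \{0, 1, \dots, k\}$ and all reachable TM configurations, the set of possible inputs $u_i$ to the FFN is finite (bounded in size by roughly $|\mathbb{Q}| \cdot |\Gamma| \cdot (k+1)$ distinct patterns). On a finite input set, any target function can be realized exactly by a two-layer ReLU network whose width scales with the number of configurations, so existence reduces to writing down appropriate weights and biases.

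First I would exploit the orthogonality built into the encoding in Appendix~\ref{appendix:a4}: state embeddings $e(q)$, symbol embeddings $e(\gamma)$, and positional encodings $p(\cdot)$ live in disjoint coordinate blocks of $\mathbb{R}^d$ and are mutually orthonormal. For each triple $(p_i, q, \gamma)$ where $p_i$ is a positional marker, I would introduce one detector neuron in the first FFN layer, with weight vector $W_1^{(j)}$ equal to the sum of the indicator directions for that triple and bias $b_1^{(j)}$ set so that the ReLU fires (outputs exactly $1$) only when $u_i$ matches the triple and outputs $0$ otherwise. The orthonormality of the embedding blocks guarantees that the detectors are pairwise separated by a fixed margin, so a clean threshold suffices.

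Second I would set the output projection $W_2$ so that each firing detector contributes the prescribed update in the correct coordinate block. Detectors corresponding to the state slot emit $e(\delta_Q(q,\gamma)) - e(q)$ in the state block and zero elsewhere; detectors corresponding to the head tape slot emit $e(\delta_\Gamma(q,\gamma)) - e(\gamma)$ in the symbol block and zero elsewhere; detectors for all other positions emit the zero vector. Because at most one detector fires per input, the FFN output at each position equals exactly one of these prescribed vectors.

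The main obstacle is that after the attention step one has $u_0 = x_0 + x_{i_h} = u_{i_h}$, so a position-blind FFN would in principle produce identical outputs at positions $0$ and $i_h$. I would resolve this by refining the encoding so that the state slot and the head tape slot carry distinct \emph{slot-type markers} in a dedicated coordinate block (for instance, a one-hot ``role'' bit left unchanged by attention), which makes $u_0$ and $u_{i_h}$ differ in at least one coordinate even though their state/symbol content coincides. The detectors for the two slots then respond to disjoint patterns, and the same FFN produces the two distinct update vectors required by the lemma. Verifying that this role-marker refinement is consistent with the attention construction of the preceding lemma (i.e., that the markers are preserved through the residual and attention computation) is the only nontrivial bookkeeping step; everything else follows from the finite-domain universal approximation argument.
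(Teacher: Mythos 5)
Your construction is essentially the paper's: a two-layer ReLU lookup table with one detector neuron per transition $(q,\gamma)$, gated by the orthonormal content and position indicators, and an output projection emitting $e(q_{t+1})-e(q_t)$ in the state block, $e(\gamma_{h_t}')-e(\gamma_{h_t})$ in the symbol block, and zero elsewhere so the residual leaves the remaining positions untouched. Where you go beyond the paper is in flagging the degeneracy $u_0 = x_0 + x_{i_h} = u_{i_h}$: the paper's proof gates the state-update neurons on ``$p(0)$'' and the tape-update neurons on ``$p(i_h)$'' as if these were distinct signals present only at the respective positions, but since both residual sums are the identical vector, a position-blind FFN as written cannot emit different updates at positions $0$ and $i_h$ --- this is a genuine soft spot in the paper's argument, not just bookkeeping. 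Your role-marker fix is the right repair, but note that simply adding markers to $x_0$ and $x_{i_h}$ does not suffice (the symmetric sum still contains both markers); you must also modify the attention so the marker block is not swapped, e.g.\ by zeroing the corresponding rows of $W_V$ so that each position's own role coordinate survives only through the residual path. With that one adjustment made explicit, your argument is complete and in fact tighter than the paper's.
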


\begin{proof}
   \textbf{ We start with constructing the first layer.}

(1) For the neurons that is updating state embedding at \( i = 0 \), in which for each transition \( (q_t, \gamma_{h_t}) \rightarrow q_{t+1} \), we allocate one neuron. We denote the neuron index as \( n(q_t, \gamma_{h_t}) \) for n \( \in1 ,2,\cdots, N_{\text{trans}} \). Let the weights \( w_1^{(n)} \) entries corresponding to  \( e(q_t) \) to be \( +1 \), \( p(0) \) to be \( +1 \) and the bias \( b_1^{(n)} = -1.5 \),
and therefore we have: \[
  z_n = w_1^{(n)} \cdot u_0 + b_1^{(n)} = 1 (e(q_t)) + 1 (p(0)) + (-1.5) = 0.5.
  \]

The neuron activates (since \( z_n > 0 \)) only if the input contains \( e(q_t) \) and \( p(0) \).

(2) For the neurons that is updating state embedding at \( i = i_h \), in which for each transition \( (q_t, \gamma_{h_t}) \rightarrow \gamma_{h_t}' \), we allocate one neuron.
 We denote the neuron index as \( n'(q_t, \gamma_{h_t}) \) in \( N_{\text{trans}} + 1 \) to \( 2N_{\text{trans}} \)
Let the weights \( w_1^{(n')} \) entries corresponding to \( e(q_t) \) to be \( +1 \), \( p(i_h) \) to be \( +1 \)
and the bias \( b_1^{(n')} = -1.5 \), and therefore we have:
  \[
  z_{n'} = w_1^{(n')} \cdot u_{i_h} + b_1^{(n')} = 1 (e(q_t)) + 1 (p(i_h)) + (-1.5) = 0.5.
  \]

The neuron activates only if the input contains \( e(q_t) \) and \( p(i_h) \).

(3) For each dimension \( j \) in \( d \), we allocate one neuron to pass through the input. Their neuron index \( n''(j) \) are from \( 2N_{\text{trans}} + 1 \) to \( 2N_{\text{trans}} + d \)
The weights: \( w_1^{(n'')} \) entries corresponding to \( {w_{1j}^{(n'')}} = 1 \), and \( {w_{1k}^{n''}} = 0 \) for \(j \neq j\),
and the bias \( b_1^{(n'')} = 0 \)

These neurons always activates since \( u_i \) has a positive component at position \( j \).

\textbf{Then we construct the second layer.}

(1) For those weights mapping state update neurons to outputs, in which for neurons \( n(q_t, \gamma_{h_t}) \):

For those weights in \( W_2 \),
  if it is in the row corresponding to \( e(q_{t+1}) \), then
    \[
    W_2^{(e(q_{t+1})), n} = 1,
    \]
if it is in the row corresponding to \( e(q_t) \), then
    \[
    W_2^{(e(q_t)), n} = -1.
    \]
    
(2) The same reasoning leads to:
for those weights mapping tape symbol update neurons to outputs, in which for neurons \( n'(q_t, \gamma_{h_t}) \):

For those weights in \( W_2 \),
  if it is in the row corresponding to \( e(\gamma_{h_t}') \), then
    \[
    W_2^{(e(\gamma_{h_t}')), n'} = 1
    \]
if it is in the row corresponding to \( e(\gamma_{h_t}) \), then
    \[
    W_2^{(e(\gamma_{h_t})), n'} = -1
    \]

For neurons \( n''(j) \), we set \( W_2^{(j), n''(j)} = 0 \) (we set the output to zero for the main embeddings). 
We also set the other entries in \( W_2 \) as zero.
The bias \(b_2\) is always 0 in the second layer.

Therefore: 
\begin{itemize}
    \item At position \( i = 0 \): \[ \text{FFN}(u_0) = e(q_{t+1}) - e(q_t) \]
    \item At position \( i = i_h \): \[ \text{FFN}(u_{i_h}) = e(\gamma_{h_t}') - e(\gamma_{h_t}) \]
 \item At other positions \( i \neq 0, i_h \): \[ \text{FFN}(u_i) = \mathbf{0} \]
\end{itemize}

\end{proof}

We have eliminated the effects of residual linking here by constructing Self-Attention and FFN.

}

\subsubsection{Iterative Layer Application}
Apply \( \mathcal{T}^{(i)} \) sequentially for \( S \) layers to simulate \( S \) TM steps:
\[
H_{t+1} = \mathcal{T}^{(1)}(H_t)
\]
\[
H_{t+2} = \mathcal{T}^{(2)}(H_{t+1})
\]
\[
\vdots
\]
\[
H_{t+S} = \mathcal{T}^{(S)}(H_{t+S-1})
\]
Each application corresponds to one TM step, updating the Transformer's hidden state to reflect the new TM configuration.

\subsection{Ensuring Accurate Simulation}\label{appendix:A5}

In this subsection, we address the requirements for ensuring that the transformer accurately simulates the Turing Machine (TM) without errors and overlaps. These requirements involve careful control over the encoding space, error accumulation from quantization, and the implementation of the TM’s transition function. Here, we explore precision constraints on correct simulation in the case of binary representations rather than one-hot representations.

\subsubsection{Uniqueness of Encoding}\label{Appendix d}

To avoid overlap between the representations of different TM configurations, the hidden state dimension \( d \) must be sufficiently large. Specifically, the inequality
{\color{black} \[
d \geq |\mathbb{Q}| + k \cdot |\Gamma|,
\]}
ensures that the hidden state dimension \( d \) is large enough to encode the current state, the tape symbols within the acceptance window, and the tape head position without collision. This guarantees that each configuration of the TM is uniquely represented in the transformer's hidden state space, minimizing the risk of two distinct TM configurations being encoded into the same hidden state.
{\color{black} In particular, if represented in binary rather than one-hot, the lower bound of \(d\) can be further compressed to \[
d \geq \log_2(|\mathbb{Q}|) + k \cdot \log_2(|\Gamma|),
\]}

\subsubsection{Minimizing Quantization Errors}
 The transformer’s computations are affected by quantization errors due to finite precision. We need to ensure that these errors do not accumulate beyond an acceptable threshold \( \varepsilon \).
{\color{black}
Assume that the maximum quantization error per computational step is given by:

\[
\delta_q = \frac{C}{Q}
\]

where \( \delta_q \) is the maximum quantization error per step. \( C \) is a constant dependent on the dynamic range of the variables. And \( Q \) is the number of quantization levels.

Over \( S_{\text{max}} \) computational steps, the total accumulated quantization error \( \epsilon_{\text{total}} \) is:

\[
\epsilon_{\text{total}} = S_{\text{max}} \cdot \delta_q = S_{\text{max}} \cdot \frac{C}{Q}
\]

To ensure that the total error does not exceed the acceptable tolerance \( \varepsilon \):

\[
\epsilon_{\text{total}} \leq \varepsilon \implies S_{\text{max}} \cdot \frac{C}{Q} \leq \varepsilon
\]

Solving for \( Q \):

\[
Q \geq \frac{C \cdot S_{\text{max}}}{\varepsilon}
\]

This inequality precisely relates \( Q \), \( S_{\text{max}} \), and \( \varepsilon \) without using approximate equalities.

From Lemma \ref{lemma:Turing-Transformer}, we have:

\[
S_{\text{max}} = \Theta( L \cdot d \cdot k \cdot \log_2 Q).
\]

Substituting \( S_{\text{max}} \) into the inequality for \( Q \):

\[
Q \geq \frac{C' \cdot L \cdot d \cdot k \cdot \log_2 Q}{\varepsilon}.
\]

This inequality involves \( Q \) on both sides. To solve for \( Q \), we can consider the properties of exponential functions. Let’s denote,
\(
Q = e^{x}
\)
,
after simplification:

\[
e^{x} \geq \frac{C'' \cdot L \cdot d \cdot k \cdot x}{\varepsilon},
\]

where \( C'' = \frac{C'}{\ln 2} \).
For sufficiently large \( x \), the exponential function \( e^{x} \) dominates the polynomial term in the numerator. Therefore, to satisfy the inequality, \( x \) must be large enough such that:

\[
e^{x} \geq \text{poly}(x)
\]

Since \( e^{x} \) grows faster than any polynomial function of \( x \), the inequality will hold for large \( x \).

Therefore, we conclude that:

\[
e^{x} \geq \frac{C'' \cdot L \cdot d \cdot k \cdot x}{\varepsilon}
\]

implies that \( x \) (and thus \( Q = e^{x} \)) must scale exponentially with \( \frac{L \cdot d \cdot k}{\varepsilon} \).

Given above, we establish that:

\[
Q \geq e^{\frac{C''' \cdot L \cdot d \cdot k}{\varepsilon}}
\]

where \( C''' \) is a constant that absorbs \( C'' \) and other constants.

}

\begin{comment}
    The transformer’s computations are affected by quantization errors due to finite precision. To ensure that these errors do not accumulate beyond an acceptable threshold \( \varepsilon \), we require:
\[
\epsilon \leq \frac{\varepsilon}{S_{\text{max}}}
\]
where \( \epsilon = \log_2(Q) \) represents the precision available per step. This inequality ensures that the quantization errors per step are controlled, allowing the simulation to remain accurate over \( S_{\text{max}} \) steps.
To further constrain the error growth over multiple steps, we impose the following condition on the number of quantization levels \( Q \):
\[
Q \geq 2^{\frac{\varepsilon \cdot \epsilon}{L \cdot d \cdot k}}
\]
This guarantees that the total accumulated error remains bounded by \( \varepsilon \):
\[
\sum_{t=1}^{S_{\text{max}}} \epsilon_t \leq \varepsilon
\]
Thus, by ensuring sufficient precision through \( Q \), we can control the error accumulation over time.
\end{comment}

\subsubsection{Correct Implementation of Transition Function \( \delta \)}

For each transformer layer to simulate one TM step, it must correctly implement the TM’s transition function \( \delta \), which is of the form:
\[
\delta(q_t, \gamma_{h_t}) = (q_{t+1}, \gamma_{h_t}', D)
\]
where \( \gamma_{h_t}' \) is the new symbol to write, and \( D \in \{L, R\} \) denotes the tape head movement direction.

The transformer must meet two conditions to ensure accurate simulation:
(1) The Self-Attention mechanism should accurately attend to the tape symbol under the head position \( h_t \) and nearby cells in the acceptance window.
(2) The Feed-Forward Network (FFN) must correctly map the state and symbol \( (q_t, \gamma_{h_t}) \) to the new state \( q_{t+1} \), updated symbol \( \gamma_{h_t}' \), and head movement \( D \).
These requirements ensure that each transformer layer faithfully simulates the TM's transition function for every step of the simulation.

\subsection{{\color{black}Combining} the Requirements}
With the simulation construction in \ref{appendix:a4}, {\color{black}combining} the requirements in \ref{appendix:A5}, we conclude that:

\[
\forall \epsilon > 0, \, \exists \mathcal{T} \text{ with parameters } (L, d, k, Q) \text{ satisfying } 
\begin{cases}
{\color{black} d \geq \log_2(|\mathbb{Q}|) + k \cdot \log_2(|\Gamma|), }\\
\color{black}Q \geq e^{\frac{C''' \cdot L \cdot d \cdot k}{\varepsilon}}
\end{cases}\]\(
\text{ such that } \forall s \leq S, \, d(H_s, \phi(C_s)) \leq \epsilon.
\)

\section{Demonstrate Lemma \ref{lemma:Turing-Transformer}}
\label{appendix:b}
{
\color{black} Lemma \ref{lemma:Turing-Transformer} is obvious, and we will only briefly state it here.
}
We aim to derive how the maximum number of steps \( S_{\text{max}} \) that a transformer can simulate scales with respect to the model's parameters: the number of layers \( L \), the dimension \( d \), the acceptance window \( k \), and the quantization levels \( Q \).

\subsection{Number of Layers \( L \)}

Since each layer simulates one Turing machine (TM) step, the maximum number of steps \( S_{\text{max}} \) scales linearly with the number of layers \( L \). We assume that in the transformer model simulating a TM, each layer corresponds to one computational step of the TM. At each layer, the transformer updates the TM's configuration from step \( t \) to step \( t+1 \). Therefore, the total number of steps that can be simulated is directly proportional to the number of layers. Therefore, the number of steps \( S_{\text{max}} \) is given by: \(
S_{\text{max}} = \Theta(L)
\)

\subsection{Dimension \( d \)}

Higher dimensions allow more detailed representations of the TM's configuration, reducing the risk of overlap between states.
 In a \( d \)-dimensional space, the maximum number of mutually orthogonal vectors (representing unique configurations) is at most \( d \). As the simulation progresses, more distinct configurations need to be represented without interference. Therefore, the number of unique, orthogonal configurations \( N_{\text{config}} \) is at most \( d \). The maximum number of steps before significant overlap or interference occurs scales linearly with \( d \), leading to the conclusion:
\(
S_{\text{max}} = \Theta(d)
\)

\subsection{Acceptance Window \( k \)}

A larger acceptance window allows the transformer to process more tape symbols simultaneously.
The acceptance window \( k \) represents the number of positions the transformer can attend to at each step. In simulating a TM, the tape head may need to access multiple symbols around its current position. A larger acceptance window \( k \) enables the transformer to incorporate more context at each step, reducing the number of steps required to propagate information and mitigating error accumulation. The speed at which information propagates is proportional to \( k \).
Thus, the ability to process more symbols per step enhances the simulation’s depth, giving the result:
\(
S_{\text{max}} = \Theta(k)
\)

\subsection{Quantization Levels \( Q \)}

Higher quantization levels reduce numerical error, allowing more steps to be simulated before accumulated error becomes prohibitive.
Quantization levels \( Q \) relate to the numerical precision of the transformer's computations. The numerical precision increases logarithmically with \( Q \), specifically \( \text{Precision} = \log_2(Q) \). As a result, higher precision reduces the per-step numerical error \( \varepsilon \), which accumulates over \( S \) steps. The accumulated error after \( S \) steps is approximately \( S \cdot \varepsilon \). To keep the total error below a threshold \( \varepsilon_{\text{max}} \), the number of steps \( S_{\text{max}} \) is bounded by \( Q \). Since the precision scales as \( \log(Q) \), the maximum number of steps scales logarithmically with the quantization levels:
\(
S_{\text{max}} = \Theta(\log(Q))
\)

%\subsubsection{Combined Scaling Relationship}
%Combining the above scaling relationships:
%\[
%S
%_{\text{max}} = \Theta(L \cdot d \cdot k \cdot \log(Q))
%\]
%This indicates that \( S_{\text{max}} \) grows linearly with \( L \), \( d \), and \( k \), and logarithmically with \( Q \).

\subsection{Scaling Relationship Derivation}

From the mapping of TM configurations to hidden states and the simulation process across multiple layers, we derive the scaling relationship:
\[
S_{\text{max}} = \Theta(L \cdot d \cdot k \cdot \log(Q))
\]
This relationship implies that the number of TM steps \( S_{\text{max}} \) scales linearly with the number of layers \( L \), the hidden dimension \( d \), and the acceptance window \( k \), while it scales logarithmically with the quantization levels \( Q \). More layers allow for more steps, higher dimensions enable more complex configurations, larger windows allow the processing more tape symbols, and higher precision reduces numerical errors over time.

\section{Proof of Theorem~\ref{T:approximation}}\label{appendix:theorem43}

\begin{proof}
Lemma \ref{lemma:Turing-Transformer-Epsilon-Delta}
%\jx{should this be \ref{lemma:Turing-Transformer-Epsilon-Delta} instead?}
told us that, since state transition function  \( \delta \) is local,%\jx{how is this relevant in this context? could we remove this sentence?}
there exists a window size \( k \) sufficient to capture all necessary information to perform \( s \) steps of \( M \). 
 Therefore, the auto-regressive Transformer, constrained by a context window size \( k \), generates the output sequence \( y \) in multiple refinement rounds. Each round \( r \) simulates a fixed number \( s \) of computational steps of \( M \), updating the output from \( y^{(r-1)} \) to \( y^{(r)} \) by processing the relevant segment of the sequence within the window \( k \). The total number of required rounds is \( R = \lceil T / s \rceil \), ensuring that all computational steps are covered.

To maintain the overall approximation error within \( \epsilon \), the error tolerance is distributed across the \( R \) rounds, assigning an error budget \( \epsilon_r = \epsilon / R \) to each round. 

 This ensures that the cumulative error across all rounds does not exceed \( \epsilon \).  
We build it with the following induction: At round 0, we have \( y^{(0)} = x \) correctly encodes the initial configuration \( C_0 \). Since no computation has been performed, the initial error is zero:
  \(
  d(y^{(0)}, C_0) = 0 \leq \epsilon.
  \)
Assume that after \( r-1 \) rounds, the output \( y^{(r-1)} \) approximates the configuration \( C_{(r-1)s} \) with error \( \epsilon_{r-1} \leq (r-1)\epsilon / R \). The error introduced in round \( r \) satisfies:
  \[
  d(y^{(r)}, C_{rs}) \leq d(y^{(r-1)}, C_{(r-1)s}) + \epsilon_r \leq (r-1)\epsilon / R + \epsilon / R = r\epsilon / R.
  \]
  Thus, the error after round \( r \) is bounded by \( r\epsilon / R \leq \epsilon \). The auto-regressive Transformer processes \( y^{(r-1)} \) within the context window \( k \) to generate \( y^{(r)} \), approximating \( C_{rs} \). 

By induction, we establish that after each refinement round \( r \), the output \( y^{(r)} \) accurately represents the configuration \( C_{rs} \) within the allocated error \( r\epsilon / R \).  Consequently, considering the termination condition after \( R = \lceil T / s \rceil \) rounds, the output \( y^{(R)} \) approximates the final configuration \( C_T = f(x) \) with:
\[
d(y^{(R)}, f(x)) \leq R \cdot (\epsilon / R) = \epsilon.
\]
\end{proof}

\section{Rademacher Complexity of k-window Next Token Prediction}\label{appendix: rade}
\begin{proof}
We start by expressing the empirical Rademacher complexity for \( \mathcal{H}_k \):

\[
\hat{\mathcal{R}}_S(\mathcal{H}_k) = \mathbb{E}_{\sigma} \left[ \sup_{h \in \mathcal{H}_k} \frac{1}{m} \sum_{i=1}^m \sigma_i h(x^{(i)}) \right].
\]

Since each \( h \in \mathcal{H}_k \) is \( L_h \)-Lipschitz and \( \| x^{(i)} \| \leq R_x \sqrt{k} \), we can bound \( \hat{\mathcal{R}}_S(\mathcal{H}_k) \) using the Lipschitz property.

By Talagrand's contraction lemma\citep{ledoux2013probability}, the standard results on Rademacher complexity of Lipschitz function over bounded domains, we have:

\[
\hat{\mathcal{R}}_S(\mathcal{H}_k) \leq \frac{L_h}{m} \mathbb{E}_{\sigma} \left[ \sup_{\| h \|_{\text{Lip}} \leq L_h} \sum_{i=1}^m \sigma_i h(x^{(i)}) \right].
\]

The supremum over \( h \) can be bounded using the dual norm:

\[
\sup_{\| h \|_{\text{Lip}} \leq L_h} \sum_{i=1}^m \sigma_i h(x^{(i)}) \leq L_h \left\| \sum_{i=1}^m \sigma_i x^{(i)} \right\|_*,
\]

where \( \| \cdot \|_* \) is the dual norm of \( \| \cdot \| \). For Euclidean norms, the dual norm is also the Euclidean norm. We compute:

\[
\mathbb{E}_{\sigma} \left\| \sum_{i=1}^m \sigma_i x^{(i)} \right\| \leq \sqrt{ \mathbb{E}_{\sigma} \left\| \sum_{i=1}^m \sigma_i x^{(i)} \right\|^2 }.
\]

Since \( \sigma_i \) are independent Rademacher variables and \( x^{(i)} \) are fixed, we have:

\[
\mathbb{E}_{\sigma} \left\| \sum_{i=1}^m \sigma_i x^{(i)} \right\|^2 = \sum_{i=1}^m \| x^{(i)} \|^2 \leq m (R_x \sqrt{k})^2 = m R_x^2 k.
\]

Therefore:

\[
\mathbb{E}_{\sigma} \left\| \sum_{i=1}^m \sigma_i x^{(i)} \right\| \leq \sqrt{m R_x^2 k} = R_x \sqrt{m k}.
\]

Substituting back into the Rademacher complexity expression:

\[
\hat{\mathcal{R}}_S(\mathcal{H}_k) \leq \frac{L_h}{m} \cdot R_x \sqrt{m k} = L_h R_x \sqrt{\frac{k}{m}}.
\]

Taking the expectation over \( S \):

\[
\mathcal{R}_m(\mathcal{H}_k) = \mathbb{E}_S \left[ \hat{\mathcal{R}}_S(\mathcal{H}_k) \right] \leq \frac{L_h R_x \sqrt{k}}{\sqrt{m}}.
\]

For a neural network with depth \( l_{\text{max}} \), activation functions with Lipschitz constant \( L_{\phi} \), and weight matrices with spectral norms bounded by \( B_d \), the Lipschitz constant \( L_h \) satisfies:

\[
L_h \leq B_{\text{spec}} L_{\phi}^{\l_{\text{max}}-1}, \quad \text{where} \quad B_{\text{spec}} = \prod_{l=1}^{l_{\text{max}}} B_l.
\]

Therefore, the Rademacher complexity of \( \mathcal{H}_k \) is bounded by:

\[
\mathcal{R}_m(\mathcal{H}_k) \leq \frac{L_h R_x \sqrt{k}}{\sqrt{m}} = \frac{B_{\text{spec}} L_{\phi}^{l_{\text{max}}-1} R_x \sqrt{k}}{\sqrt{m}}.
\]
\end{proof}
\section{Proof of Theorem~\ref{theorem:learn}}\label{appendix:theorem5.6}

\begin{proof}

Using the standard generalization bound via Rademacher complexity, we have:

\[
L(h) \leq \hat{L}_S(h) + 2 \mathcal{R}_m(\mathcal{H}_k) + C \sqrt{\frac{\log(1/\delta)}{2m}},
\]

with probability at least \( 1 - \delta \), where \( \mathcal{R}_m(\mathcal{H}_k) \) is the Rademacher complexity of the hypothesis class \( \mathcal{H}_k \).

By Lemma~\ref{lemmarad}, we have:

\[
\mathcal{R}_m(\mathcal{H}_k) \leq \frac{B_{\text{spec}} L_{\phi}^{l_{\text{max}}-1} R_x \sqrt{k}}{\sqrt{m}}.
\]

Substituting the Rademacher complexity into the generalization error bound:

\[
L(h) \leq \hat{L}_S(h) + 2 L \frac{B_{\text{spec}} L_{\phi}^{l_{\text{max}}-1} R_x \sqrt{k}}{\sqrt{m}} + C \sqrt{\frac{\log(1/\delta)}{2m}}.
\]

Assuming \( \hat{L}_S(h) \approx 0 \) (perfect empirical risk minimization), the inequality simplifies to:

\[
L(h) \leq \frac{2 L B_{\text{spec}} L_{\phi}^{l_{\text{max}}-1} R_x \sqrt{k}}{\sqrt{m}} + C \sqrt{\frac{\log(1/\delta)}{2m}}.
\]

To ensure \( L(h) \leq \epsilon \), we require:

\[
\frac{2 L B_{\text{spec}} L_{\phi}^{l_{\text{max}}-1} R_x \sqrt{k}}{\sqrt{m}} + C \sqrt{\frac{\log(1/\delta)}{2m}} \leq \epsilon.
\]

Let’s denote:

 \[ A = 2 L B_{\text{spec}} L_{\phi}^{l_{\text{max}}-1} R_x \sqrt{k}, \]
and \[ B = C \sqrt{\frac{\log(1/\delta)}{2}}. \]

Then the inequality becomes:

\[
\frac{A}{\sqrt{m}} + \frac{B}{\sqrt{m}} \leq \epsilon \quad \implies \quad \frac{A + B}{\sqrt{m}} \leq \epsilon.
\]

Solving for \( m \):

\[
\sqrt{m} \geq \frac{A + B}{\epsilon} \quad \implies \quad m \geq \left( \frac{A + B}{\epsilon} \right)^2.
\]

Substituting back the expressions for \( A \) and \( B \):

\[
m \geq \left( \frac{2 L B_{\text{spec}} L_{\phi}^{l_{\text{max}}-1} R_x \sqrt{k} + C \sqrt{\frac{\log(1/\delta)}{2}} }{ \epsilon } \right)^2.
\]

By simplifying:
\[
m \geq \frac{1}{\epsilon^2} \left[ 4 L^2 B_{\text{spec}}^2 L_{\phi}^{2(l_{\text{max}}-1)} R_x^2 k + 4 L B_{\text{spec}} L_{\phi}^{l_{\text{max}}-1} R_x C \sqrt{k} \sqrt{\frac{\log(1/\delta)}{2}} + \frac{C^2 \log(1/\delta)}{2} \right].
\]

\end{proof}
\section{Proof of Theorem~\ref{corollary1}}\label{appendix:theorem5.7}
\begin{proof}

At time step \( t \), the prediction error depends on the cumulative effect of previous errors:

\[
\epsilon_t \leq L_{\text{model}} \epsilon_{t-1} + \epsilon_{\text{single}},
\]

where
\( L_{\text{model}} = B_{\text{spec}} L_{\phi}^{l_{\text{max}}-1} \) is the Lipschitz constant of the model with respect to its inputs, capturing how input errors affect the output.
And \( \epsilon_{\text{single}} \) is the inherent error at each step due to model imperfections.
We unroll the recursion to express \( \epsilon_t \) in terms of \( \epsilon_{\text{single}} \):

\[
\epsilon_t \leq L_{\text{model}}^{t-1} \epsilon_1 + \epsilon_{\text{single}} \sum_{i=0}^{t-2} L_{\text{model}}^i.
\]

Assuming \( \epsilon_1 = \epsilon_{\text{single}} \) (initial error), we get:

\[
\epsilon_t \leq \epsilon_{\text{single}} \left( L_{\text{model}}^{t-1} + \sum_{i=0}^{t-2} L_{\text{model}}^i \right) = \epsilon_{\text{single}} \left( L_{\text{model}}^{t-1} + \frac{L_{\text{model}}^{t-1} - 1}{L_{\text{model}} - 1} \right).
\]

Simplifying:

\[
\epsilon_t \leq \epsilon_{\text{single}} \cdot \frac{L_{\text{model}}^{t} - 1}{L_{\text{model}} - 1}.
\]

The cumulative error is the sum over all time steps:

\[
\epsilon_{\text{cumulative}} = \sum_{t=1}^{T} \epsilon_t \leq \epsilon_{\text{single}} \sum_{t=1}^{T} \frac{L_{\text{model}}^{t} - 1}{L_{\text{model}} - 1}.
\]

\[\Downarrow\]

\[
\epsilon_{\text{cumulative}} \leq \epsilon_{\text{single}} \cdot \frac{1}{L_{\text{model}} - 1} \left( \sum_{t=1}^{T} L_{\text{model}}^{t} - T \right),
\]

where

\[
\sum_{t=1}^{T} L_{\text{model}}^{t} = L_{\text{model}} \cdot \frac{L_{\text{model}}^{T} - 1}{L_{\text{model}} - 1}.
\]

Therefore:

\[
\epsilon_{\text{cumulative}} \leq \epsilon_{\text{single}} \cdot \frac{1}{L_{\text{model}} - 1} \left( L_{\text{model}} \cdot \frac{L_{\text{model}}^{T} - 1}{L_{\text{model}} - 1} - T \right).
\]

This expression captures the exponential growth of errors due to the recursive dependence in the model.
To ensure \( \epsilon_{\text{cumulative}} \leq \epsilon \), we need:

\[
\epsilon_{\text{single}} \leq \epsilon \cdot \left( \frac{1}{L_{\text{model}} - 1} \left( L_{\text{model}} \cdot \frac{L_{\text{model}}^{T} - 1}{L_{\text{model}} - 1} - T \right) \right)^{-1}.
\]

From our Theorem~\ref{theorem:learn}, the required sample size to achieve \( \epsilon_{\text{single}} \) at each time step is:

\[
m \geq \frac{1}{\epsilon_{\text{single}}^2}\left[ 4 L^2 B_{\text{spec}}^2 L_{\phi}^{2(l_{\text{max}}-1)} R_x^2 k + \text{low order term} \right],
\]

Substituting the expression for \( \epsilon_{\text{single}} \) Theorem\ref{theorem:learn}:

\[
m \geq \left( \epsilon \cdot \left( \frac{1}{L_{\text{model}} - 1} \left( L_{\text{model}} \cdot \frac{L_{\text{model}}^{T} - 1}{L_{\text{model}} - 1} - T \right) \right)^{-1} \right)^{-2} \left[ 4 L^2 B_{\text{spec}}^2 L_{\phi}^{2(l_{\text{max}}-1)} R_x^2 k + \text{low order term} \right].
\]

This expression reflects the impact of error propagation on the required sample size.

We can further simplify by recognizing that when \( L_{\text{model}} > 1 \), \( L_{\text{model}}^{T} \) grows exponentially, and the term involving \( T \) becomes negligible in comparison. Thus, the cumulative error is dominated by:

\[
\epsilon_{\text{cumulative}} \approx \epsilon_{\text{single}} \cdot \frac{L_{\text{model}}^{T}}{(L_{\text{model}} - 1)^2}.
\]

Therefore, to keep \( \epsilon_{\text{cumulative}} \leq \epsilon \), we require:

\[
\epsilon_{\text{single}} \leq \epsilon \cdot \frac{(L_{\text{model}} - 1)^2}{L_{\text{model}}^{T}}.
\]

Substituting back \(  L_{\text{model}} = B_{\text{spec}} L_{\phi}^{l_{\text{max}}-1} \), the required sample size becomes:

\[
m \geq \frac{\left( B_{\text{spec}} L_{\phi}^{l_{\text{max}}-1} \right)^{2T}}{\epsilon^{2} \left( B_{\text{spec}} L_{\phi}^{l_{\text{max}}-1} - 1 \right)^{4}} \left[ 4 L^2 B_{\text{spec}}^2 L_{\phi}^{2(l_{\text{max}}-1)} R_x^2 k + \text{low order term} \right].
\]

\end{proof}
\section{Proof of Theorem~\ref{theorem5.9}}\label{appendix:theorem5.9}

\begin{proof}
Within a single round of length \( \tau = T/R \), the error at time step \( t \) depends on the errors in previous steps:

\[
\epsilon_t \leq L_{\text{model}} \epsilon_{t-1} + \epsilon_{\text{single}},
\]

Unrolling the recursion within a round, we have:

\[
\epsilon_t \leq \epsilon_{\text{single}} \sum_{i=0}^{t-1} L_{\text{model}}^i = \epsilon_{\text{single}} \cdot \frac{L_{\text{model}}^{t} - 1}{L_{\text{model}} - 1}.
\]

The cumulative error over \( \tau \) steps in a round is:

\[
\epsilon_{\text{round}} = \sum_{t=1}^{\tau} \epsilon_t \leq \epsilon_{\text{single}} \sum_{t=1}^{\tau} \frac{L_{\text{model}}^{t} - 1}{L_{\text{model}} - 1}.
\]

This sum can be simplified using geometric series, leading to:

\[
\epsilon_{\text{round}} \leq \epsilon_{\text{single}} \cdot \frac{L_{\text{model}}^{\tau+1} - (\tau L_{\text{model}}) - 1 + \tau}{(L_{\text{model}} - 1)^2}.
\]

The total cumulative error is the sum over all rounds:

\[
\epsilon_{\text{total}} = R \epsilon_{\text{round}}.
\]

To ensure \( \epsilon_{\text{total}} \leq \epsilon \), we require:

\[
R \epsilon_{\text{round}} \leq \epsilon.
\]

Substituting the expression for \( \epsilon_{\text{round}} \):

\[
\epsilon_{\text{single}} \leq \epsilon \cdot \left( R \cdot \frac{L_{\text{model}}^{\tau+1} - (\tau L_{\text{model}}) - 1 + \tau}{(L_{\text{model}} - 1)^2} \right)^{-1}.
\]

The required sample size to achieve \( \epsilon_{\text{single}} \) is:

\[
m \geq \frac{1}{\epsilon_{\text{single}}^2} \left[ 4 L^2 B_{\text{spec}}^2 L_{\phi}^{2(l_{\text{max}}-1)} R_x^2 k + \text{low order term} \right],
\]

Substituting the expression for \( \epsilon_{\text{single}} \):

\[
m \geq \left( \epsilon \cdot \left( R \cdot \frac{L_{\text{model}}^{\tau+1} - (\tau L_{\text{model}}) - 1 + \tau}{(L_{\text{model}} - 1)^2} \right)^{-1} \right)^{-2} \left[ A + B + C \right].
\]

When \( L_{\text{model}} > 1 \) and \( \tau \) is not too large, the dominant term in the numerator is \( L_{\text{model}}^{\tau+1} \), and we can approximate:

\[
\epsilon_{\text{round}} \approx \epsilon_{\text{single}} \cdot \frac{L_{\text{model}}^{\tau+1}}{(L_{\text{model}} - 1)^2}.
\]

Thus, the total cumulative error is:

\[
\epsilon_{\text{total}} \approx R \epsilon_{\text{single}} \cdot \frac{L_{\text{model}}^{\tau+1}}{(L_{\text{model}} - 1)^2}.
\]

To satisfy \( \epsilon_{\text{total}} \leq \epsilon \):

\[
\epsilon_{\text{single}} \leq \epsilon \cdot \frac{(L_{\text{model}} - 1)^2}{R L_{\text{model}}^{\tau+1}}.
\]

Substituting back into \( m \):

\[
m \geq \left( \epsilon \cdot \frac{(L_{\text{model}} - 1)^2}{R L_{\text{model}}^{\tau+1}} \right)^{-2} \left[ 4 L^2 B_{\text{spec}}^2 L_{\phi}^{2(l_{\text{max}}-1)} R_x^2 k + \text{low order term} \right] \] \[ = \left( \frac{R L_{\text{model}}^{\tau+1}}{\epsilon (L_{\text{model}} - 1)^2} \right)^{2} \left[ 4 L^2 B_{\text{spec}}^2 L_{\phi}^{2(l_{\text{max}}-1)} R_x^2 k + \text{low order term} \right].
\]

Recall that \( \tau = T / R \), so:

\[
L_{\text{model}}^{\tau+1} = L_{\text{model}}^{\frac{T}{R} + 1}.
\]

Therefore, the sample size becomes:

\[
m \geq \left( \frac{L_{\text{model}}^{\frac{T}{R}+1} \cdot R}{\epsilon (L_{\text{model}} - 1)^2} \right)^{2} \left[ 4 L^2 B_{\text{spec}}^2 L_{\phi}^{2(l_{\text{max}}-1)} R_x^2 k + \text{low order term} \right].
\]

Simplifying and substituting back:

  \[
m \geq  \frac{\left(B_{\text{spec}} L_{\phi}^{l_{\text{max}}-1}\right)^{\frac{2T}{R}+2} \cdot R^2 }{\epsilon^2 (B_{\text{spec}} L_{\phi}^{l_{\text{max}}-1} - 1)^4}  \left[ 4 L^2 B_{\text{spec}}^2 L_{\phi}^{2(l_{\text{max}}-1)} R_x^2 k + \text{low order term} \right].
\]

\end{proof}
\section{Proof of Lemma~\ref{aggregate}}\label{appendix:aggragate}
\begin{proof}
For each \( r \), with probability at least \( 1 - \delta_r \):

\[
L^{(r)}(h^{(r)}) \leq \hat{L}_m^{(r)}(h^{(r)}) + \epsilon_r,
\]

where:

\( \hat{L}_m^{(r)}(h^{(r)}) \) is the empirical loss on \( m \) samples at round \( r \), and \( \epsilon_r = O\left( \frac{(B^{(r)})^2 (L^{(r)})^2}{\sqrt{m}} \right) + \sqrt{ \frac{\log(1/\delta_r)}{2 m} } \).

Assuming Lipschitz continuity and bounded loss functions, we can express:

\[
L^{(r)}(h^{(r)}) \leq \hat{L}_m^{(r)}(h^{(r)}) + \epsilon_r + \gamma_r L^{(r-1)}(h^{(r-1)}),
\]

where \( \gamma_r \) quantifies the impact of errors from round \( r-1 \) on round \( r \).

Expand the error recursively:
\[
\begin{aligned}
L^{(r}(h^{(r)}) &\leq \hat{L}_m^{(r)}(h^{(r)}) + \epsilon_r + \gamma_r L^{(r-1)}(h^{(r-1)}) \\
&\leq \hat{L}_m^{(r)}(h^{(r)}) + \epsilon_r + \gamma_r \left( \hat{L}_m^{(r-1)}(h^{(r-1)}) + \epsilon_{r-1} + \gamma_{r-1} L^{(r-2)}(h^{(r-2)}) \right) \\
&= \hat{L}_m^{(r)}(h^{(r)}) + \epsilon_r + \gamma_r \hat{L}_m^{(r-1)}(h^{(r-1)}) + \gamma_r \epsilon_{r-1} + \gamma_r \gamma_{r-1} L^{(r-2)}(h^{(r-2)}) \\
&\vdots \\
&= \sum_{k=1}^{r} \left( \left( \prod_{j=k+1}^{r} \gamma_j \right) \left( \hat{L}_m^{(k)}(h^{(k)}) + \epsilon_k \right) \right).
\end{aligned}
\]
\end{proof}
\section{Proof of Theorem~\ref{cumulative}}\label{appendix:cumulative}

\begin{proof}
The Cumulative Error:

\[
L(h) = \sum_{r=1}^R \lambda_r L^{(r)}(h^{(r)}) \leq \sum_{r=1}^R \lambda_r \sum_{k=1}^{r} \left( \left( \prod_{j=k+1}^{r} \gamma_j \right) \left( \hat{L}_m^{(k)}(h^{(k)}) + \epsilon_k \right) \right).
\]

Let \( G_{r,k} = \lambda_r \prod_{j=k+1}^{r} \gamma_j \). Then:

\[
L(h) \leq \sum_{k=1}^R \left( \left( \hat{L}_m^{(k)}(h^{(k)}) + \epsilon_k \right) \sum_{r=k}^R G_{r,k} \right).
\]

Let \( \Lambda_k = \sum_{r=k}^R G_{r,k} \). Then:

\[
L(h) \leq \sum_{k=1}^R \Lambda_k \left( \hat{L}_m^{(k)}(h^{(k)}) + \epsilon_k \right).
\]

\section{Proof of Theorem~\ref{divergence}}\label{appendix:divergence}

Consider the assumption we made that the error impact factor \(\gamma = \gamma_r\) is uniform between each round, also a uniform influence factor \(\lambda_r = \lambda, \forall r \in \{1,\cdots, R\}\) and a uniform lower bound \(\eta \geq  \hat{L}_{m,i}(h_{i}) + \epsilon_i \) for simplification. Denote the upper bound of cumulative error \(L(h_R)\) as \( \bar L(h_R) = \sum_{i=1}^R \Lambda_i \left( \hat{L}_{m,i}(h_{i}) + \epsilon_i \right)\).

In this case, we have:

\[
G_{r,i} = \lambda \prod_{j=i+1}^{r} \gamma_j = \lambda \gamma^{r - i}, \quad \text{since } \gamma_j = \gamma.
\]

\[
\Lambda_i = \sum_{r=i}^R G_{r,i} = \lambda \sum_{r=i}^R \gamma^{r - i} = \lambda \sum_{s=0}^{R - i} \gamma^{s} = \lambda \frac{1 - \gamma^{R - i + 1}}{1 - \gamma}, \quad \text{for } \gamma \neq 1.
\]

From this, it can be easily obtained that, for \(\gamma \neq 1\): 

\[
\begin{aligned}
    \lim_{R \to \infty} \bar L(h_R) =&\lim_{R \to \infty} \sum_{i=1}^R \Lambda_i \left( \hat{L}_{m,i}(h_{i}) + \epsilon_i \right).\\
    =&\lim_{R \to \infty} \frac{\eta \lambda}{1 - \gamma} \sum_{i=1}^R \left( 1 - \gamma^{R - i + 1} \right) \to \infty.
\end{aligned}
\]
\end{proof}
\section{Proof of Theorem~\ref{intervention}}\label{appendix:intervention}

\begin{proof}

When all \( \gamma_j = \gamma \), \( \Lambda_i \) simplifies to:

\[
\Lambda_i = \sum_{r=i}^{R} \lambda_r \gamma^{r - i}
\]

This is because:

\[
\prod_{j=i+1}^{r} \gamma_j = \gamma^{r - i}
\]

Suppose we change \( \gamma_j \) to \( \gamma' \) at certain rounds \( j \in H \subset \{1, 2, \dots, R\} \).

Then we define \(\Lambda_i^{\text{modified}}\):

\[
\Lambda_i^{\text{modified}} = \sum_{r=i}^{R} \lambda_r \left( \prod_{j=i+1}^{r} \gamma_j \right)
\]

where:

\[
\gamma_j =
\begin{cases}
\gamma & \text{if } j \notin H \\
\gamma' & \text{if } j \in H
\end{cases}
\]

We can write:

\[
\frac{\Lambda_i^{\text{modified}}}{\Lambda_i} = \frac{\sum_{r=i}^{R} \lambda_r \left( \prod_{j=i+1}^{r} \gamma_j \right)}{\sum_{r=i}^{R} \lambda_r \gamma^{r - i}}
\]

Let’s define:

\[
\delta_{i,r} = \frac{\prod_{j=i+1}^{r} \gamma_j}{\gamma^{r - i}}
\]

Then:

\[
\frac{\Lambda_i^{\text{modified}}}{\Lambda_i} = \frac{\sum_{r=i}^{R} \lambda_r \gamma^{r - i} \delta_{i,r}}{\sum_{r=i}^{R} \lambda_r \gamma^{r - i}} = \mathbb{E}_{r \sim \mu_i} [ \delta_{i,r} ]
\]

where \( \mu_i \) is a probability distribution over \( r \) defined by:

\[
\mu_i(r) = \frac{\lambda_r \gamma^{r - i}}{\Lambda_i}
\]

Since \( \gamma_j \in \{\gamma, \gamma'\} \), we have:

\[
\delta_{i,r} = \prod_{j=i+1}^{r} \frac{\gamma_j}{\gamma} = \left( \frac{\gamma'}{\gamma} \right)^{h_{i,r}}
\]

where \( h_{i,r} \) is the number of rounds between \( i+1 \) and \( r \) where \( \gamma_j = \gamma' \).

Thus:

\[
\Lambda_i^{\text{modified}} = \Lambda_i \times \mathbb{E}_{r \sim \mu_i} \left[ \left( \frac{\gamma'}{\gamma} \right)^{h_{i,r}} \right]
\]

This shows that ,\( \Lambda_i^{\text{modified}} \leq \Lambda_i \), since \( 0 \leq \frac{\gamma'}{\gamma} \leq 1 \).

The cumulative error with original \( \gamma \) is:

\[
L(h_R) \leq \sum_{i=1}^{R} \Lambda_i \left( \hat{L}_{m,i}(h_i) + \epsilon_i \right)
\]

With the modified \( \gamma_j \):

\[
L^{\text{modified}}(h_R) \leq \sum_{i=1}^{R} \Lambda_i^{\text{modified}} \left( \hat{L}_{m,i}(h_i) + \epsilon_i \right)
\]

\[
\Delta L(h_R) = L(h_R) - L^{\text{modified}}(h_R) = \sum_{i=1}^{R} \left( \Lambda_i - \Lambda_i^{\text{modified}} \right) \left( \hat{L}_{m,i}(h_i) + \epsilon_i \right)
\]

Let:

\[
\kappa_i = \frac{\Lambda_i^{\text{modified}}}{\Lambda_i} = \mathbb{E}_{r \sim \mu_i} \left[ \left( \frac{\gamma'}{\gamma} \right)^{h_{i,r}} \right]
\]

Then:

\[
L^{\text{modified}}(h_R) = \sum_{i=1}^{R} \kappa_i \Lambda_i \left( \hat{L}_{m,i}(h_i) + \epsilon_i \right)
\]

The overall reduction is:

\[
\Delta L(h_R) = \sum_{i=1}^{R} \left( 1 - \kappa_i \right) \Lambda_i \left( \hat{L}_{m,i}(h_i) + \epsilon_i \right)
\]

\end{proof}

{\color{black}
\section{Lipschitz-Continuity of Cross-Entropy}
\label{appendix:l}

To show that the multi-class cross-entropy loss function is \( L \)-Lipschitz with respect to its first argument and bounded by some constant \( C > 0 \), we'll analyze its properties step by step.

\subsection{Definition of Cross-Entropy}
\begin{definition}
    For a classification problem with \( K \) classes, the multi-class cross-entropy loss is defined as:
\[
\ell(\mathbf{p}, \mathbf{y}) = -\sum_{k=1}^{K} y_k \log(p_k)
\]
\end{definition}

where \( \mathbf{y} = (y_1, y_2, \dots, y_K) \) is the one-hot encoded true label vector. So \( y_k \in \{0, 1\} \) and \( \sum_{k=1}^{K} y_k = 1 \).
And \( \mathbf{p} = (p_1, p_2, \dots, p_K) \) is the predicted probability vector from the model, where each \( p_k \in (0, 1) \) and \( \sum_{k=1}^{K} p_k = 1 \).

Because \( \mathbf{y} \) is one-hot encoded, only one term in the summation is non-zero:

\[
\ell(\mathbf{p}, \mathbf{y}) = -\log(p_{k^*})
\]

where \( k^* \) is the index of the true class.

The loss \( \ell(\mathbf{p}, \mathbf{y}) = -\log(p_{k^*}) \) approaches infinity as \( p_{k^*} \) approaches zero. However, in practice, the predicted probabilities are never exactly zero due to numerical stability techniques (e.g., adding a small \( \varepsilon > 0 \) to predictions).
We therefore restrict \( p_k \) to a closed interval \( [\varepsilon, 1 - (K - 1)\varepsilon] \) to ensure all probabilities are valid and sum to one. Since \( p_{k^*} \geq \varepsilon \):

\[
\ell_{\text{max}} = -\log(\varepsilon)
\]

Thus, the loss is bounded:

\[
\ell(\mathbf{p}, \mathbf{y}) \leq C = -\log(\varepsilon)
\]

\begin{definition}
    A function \( f \) is \( L \)-Lipschitz continuous with respect to \( \mathbf{p} \) if:

\[
\left| f(\mathbf{p}_1) - f(\mathbf{p}_2) \right| \leq L \left\| \mathbf{p}_1 - \mathbf{p}_2 \right\|
\]

for all \( \mathbf{p}_1, \mathbf{p}_2 \) in the domain, and \( \left\| \cdot \right\| \) denotes a norm (e.g., Euclidean norm).
\end{definition}

\begin{theorem}
    The cross-entropy loss function \(\ell(\mathbf{p}, \mathbf{y})\) is \( L \)-Lipschitz continuous with respect to \( \mathbf{p} \) with \( L = \frac{1}{\varepsilon} \).
\end{theorem}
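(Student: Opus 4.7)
The plan is to reduce the Lipschitz estimate to a gradient-norm bound on a convex domain. Fix a one-hot label $\mathbf{y}$ with true class index $k^*$, so that $\ell(\mathbf{p},\mathbf{y}) = -\log(p_{k^*})$ depends on $\mathbf{p}$ only through the coordinate $p_{k^*}$. First I would compute the gradient directly:
$$\nabla_{\mathbf{p}} \ell(\mathbf{p},\mathbf{y}) = -\frac{1}{p_{k^*}}\,\mathbf{e}_{k^*},$$
where $\mathbf{e}_{k^*}$ is the standard basis vector at coordinate $k^*$. Its Euclidean norm is $\|\nabla_{\mathbf{p}}\ell\|_2 = 1/p_{k^*}$, and the domain restriction $p_{k^*}\in[\varepsilon,1-(K-1)\varepsilon]$ immediately yields the uniform bound $\|\nabla_{\mathbf{p}}\ell\|_2\le 1/\varepsilon$.

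Next I would promote this pointwise bound to a global Lipschitz estimate via the mean value inequality. The truncated simplex $\{\mathbf{p}:\sum_k p_k=1,\ p_k\in[\varepsilon,1-(K-1)\varepsilon]\}$ is convex, so for any two admissible vectors $\mathbf{p}_1,\mathbf{p}_2$ the segment $\mathbf{p}(t)=(1-t)\mathbf{p}_1+t\mathbf{p}_2$, $t\in[0,1]$, stays in the domain. The fundamental theorem of calculus together with Cauchy--Schwarz then gives
$$\bigl|\ell(\mathbf{p}_1,\mathbf{y})-\ell(\mathbf{p}_2,\mathbf{y})\bigr|=\left|\int_0^1 \nabla_{\mathbf{p}}\ell(\mathbf{p}(t),\mathbf{y})\cdot(\mathbf{p}_2-\mathbf{p}_1)\,dt\right|\le \frac{1}{\varepsilon}\,\|\mathbf{p}_1-\mathbf{p}_2\|_2,$$
which is exactly the claimed $L$-Lipschitz property with $L=1/\varepsilon$.

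The main subtlety, rather than a genuine obstacle, is that the bound relies essentially on the truncation $p_{k^*}\ge\varepsilon$: on the open simplex $-\log p_{k^*}$ is unbounded and has unbounded gradient near the boundary, so the Lipschitz constant must blow up as $\varepsilon\to 0$. The convexity of the truncated domain is also essential, since it allows the segment-based mean value argument; without it one could not directly transfer the pointwise gradient bound to a global Lipschitz bound. Finally, although the derivation above assumes a one-hot $\mathbf{y}$, the same argument extends to any label vector with $\sum_k y_k=1$ and $y_k\ge 0$: the gradient becomes $\nabla_{\mathbf{p}}\ell=-(y_1/p_1,\dots,y_K/p_K)$, whose norm is still bounded by $\sqrt{\sum_k (y_k/\varepsilon)^2}\le 1/\varepsilon$, so the same constant $L=1/\varepsilon$ works.
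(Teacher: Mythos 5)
Your proposal is correct and follows essentially the same route as the paper's proof: compute the gradient $-\frac{1}{p_{k^*}}\mathbf{e}_{k^*}$, bound its norm by $\frac{1}{\varepsilon}$ using the truncation $p_{k^*}\geq\varepsilon$, and transfer this to a global Lipschitz bound via a mean-value argument (the paper invokes the mean value theorem with an intermediate point $\xi$, you use the equivalent integral form). Your explicit remarks on the convexity of the truncated simplex and the extension to non-one-hot labels are minor refinements the paper omits, but the core argument is identical.
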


\begin{proof}
    The gradient of \( \ell \) with respect to \( \mathbf{p} \) is:

\[
\nabla_{\mathbf{p}} \ell(\mathbf{p}, \mathbf{y}) = \left( \frac{\partial \ell}{\partial p_1}, \frac{\partial \ell}{\partial p_2}, \dots, \frac{\partial \ell}{\partial p_K} \right)
\]

Since \( \ell(\mathbf{p}, \mathbf{y}) = -\log(p_{k^*}) \), the partial derivatives are:

\[
\frac{\partial \ell}{\partial p_k} = \begin{cases}
-\frac{1}{p_k} & \text{if } k = k^* \\
0 & \text{if } k \neq k^*
\end{cases}
\]

Using the Euclidean norm:

\[
\left\| \nabla_{\mathbf{p}} \ell \right\| = \sqrt{\sum_{k=1}^{K} \left( \frac{\partial \ell}{\partial p_k} \right)^2} = \left| -\frac{1}{p_{k^*}} \right| = \frac{1}{p_{k^*}}
\]

Since \( p_{k^*} \geq \varepsilon \):

\[
\left\| \nabla_{\mathbf{p}} \ell \right\| \leq \frac{1}{\varepsilon}
\]

For any two probability vectors \( \mathbf{p}_1 \) and \( \mathbf{p}_2 \), there exists \( \xi \) between \( \mathbf{p}_1 \) and \( \mathbf{p}_2 \) such that:

\[
\ell(\mathbf{p}_1, \mathbf{y}) - \ell(\mathbf{p}_2, \mathbf{y}) = \nabla_{\mathbf{p}} \ell(\xi)^T (\mathbf{p}_1 - \mathbf{p}_2)
\]

Taking absolute values:

\[
\left| \ell(\mathbf{p}_1, \mathbf{y}) - \ell(\mathbf{p}_2, \mathbf{y}) \right| \leq \left\| \nabla_{\mathbf{p}} \ell(\xi) \right\| \left\| \mathbf{p}_1 - \mathbf{p}_2 \right\|
\]

Using the bound on the gradient norm:

\[
\left| \ell(\mathbf{p}_1, \mathbf{y}) - \ell(\mathbf{p}_2, \mathbf{y}) \right| \leq \frac{1}{\varepsilon} \left\| \mathbf{p}_1 - \mathbf{p}_2 \right\|
\]

Therefore, the Lipschitz constant is:

\[
L = \frac{1}{\varepsilon}
\]

\end{proof}

}

\end{document}